\newtheorem{theorem}{Theorem}
\newcommand\bs[1]{\boldsymbol{#1}}
\newcommand\bv[1]{\mathbf{#1}}
\newcommand{\ourmethod}{\texttt{DP-REC}}
\newcommand{\rec}{\texttt{REC}}
\newcommand{\fedavg}{\texttt{FedAvg}}
\newcommand{\cpsgd}{\texttt{cpSGD}}
\newcommand{\dpfedavg}{\texttt{DP-FedAvg}}
\newcommand{\ddgauss}[1]{\texttt{DDGauss{#1}}}
\newcommand{\ddg}[1]{\texttt{DDG{#1}}}
\algnewcommand{\LineComment}[1]{\State \(\triangleright\) #1}
\algrenewcommand\algorithmicindent{1.1em}%
\newenvironment{manualtheorem}[1]{%
  \manualtheoreminner
}{\endmanualtheoreminner}
\newtheorem{lemma}[theorem]{Lemma}
\title{DP-REC: Private \& Communication-Efficient Federated Learning}
\author{%
 Aleksei Triastcyn,\enskip Matthias Reisser,\enskip Christos Louizos\\
 Qualcomm AI Research\thanks{Qualcomm AI Research is an initiative of Qualcomm Technologies, Inc. and/or its subsidiaries.}\\
 \texttt{\{atriastc,mreisser,clouizos\}@qti.qualcomm.com}
}
\begin{document}
    \newif\ifappendix
    \newif\ifcontent
    
    \contenttrue
    \appendixtrue
    
\ifcontent

\maketitle
\begin{abstract}
Privacy and communication efficiency are important challenges in federated training of neural networks, and combining them is still an open problem. In this work, we develop a method that unifies highly compressed communication and differential privacy (DP). We introduce a compression technique based on Relative Entropy Coding (REC) to the federated setting. With a minor modification to REC, we obtain a provably differentially private learning algorithm, DP-REC, and show how to compute its privacy guarantees. Our experiments demonstrate that DP-REC drastically reduces communication costs while providing privacy guarantees comparable to the state-of-the-art.
\end{abstract}

\section{Introduction}
\label{sec:introduction}
The performance of modern neural-network-based machine learning models scales exceptionally well with the amount of data that they are trained on \citep{kaplan2020scaling,henighan2020scaling}. At the same time, industry \citep{flocwebsite}, legislators \citep{dwork2019differential,voigt2017eu} and consumers \citep{flurryblog} have become more conscious about the need to protect the privacy of the data that might be used in training such models. Federated learning (FL) describes a machine learning principle that enables learning on decentralized data by computing updates on-device. Instead of sending its data to a central location, a ``client'' in a federation of devices sends model updates computed on its data to the central server. Such an approach to learning from decentralized data promises to unlock the computing capabilities of billions of edge devices, enable personalized models and new applications in \textit{e.g.} healthcare due to the inherently more private nature of the approach. 

On the other hand, the federated paradigm brings challenges along many dimensions such as learning from non-i.i.d. data, resource-constrained devices, heterogeneous compute and communication capabilities, questions of fairness and representation, as well as the focus of our paper: communication overhead and characterization of privacy. Neural network training requires many passes over the data, resulting in repeated transfer of the model and updates between the server and the clients, potentially making communication a primary bottleneck~\citep{kairouz2019advances,wang2021field}. Compressing updates is an active area of research in FL and an essential step in ``untethering'' edge devices from WiFi. 
Moreover, while FL is intuitively more private through keeping data on-device, client updates have been shown to reveal sensitive information, even allowing to reconstruct a client's training data~\citep{geiping2020inverting}. To truly protect client privacy, a more rigorous mathematical notion of Differential Privacy (DP) is widely adopted as the de-facto standard in FL. More specifically, DP for FL is usually defined at a client-level, which provides plausible deniability of an individual client's contribution to the federated model. 
Both aspects of FL, communication efficiency and differential privacy, have been extensively studied in separation. Effective combination of the two, however, is an open problem and an active area of research \citep{kairouz2021distributed,chen2021breaking,girgis2020shuffled}. Some methods offer very limited compression capabilities for comparable utility \citep{kairouz2021distributed}, whereas others \citep{girgis2020shuffled} offer significant compression but require additional compromises, due to a looser privacy composition. 

In this work, we present \emph{Differentially Private Relative Entropy Coding} (\ourmethod{}), a unified approach to jointly tackle privacy and communication efficiency. \ourmethod{} makes use of the information-limiting constraints of DP to encode the client updates in FL with extremely short messages.
First, we build our method based on compression without quantization; namely, the lossy variant of Relative Entropy Coding (REC), recently proposed by \citet{flamich2020compressing}. Second, we show that it can be modified to satisfy DP, and we provide a proof of its privacy guarantees along with the appropriate accounting technique. Third, we run extensive evaluation on $4$ datasets and $3$ types of models to demonstrate that our algorithm achieves extreme compression of client-to-server updates (down to $7$ bits \textit{per tensor}) at privacy levels $\varepsilon < 1$ with a small impact on utility ($<6.5\%$ accuracy reduction on FEMNIST compared to \dpfedavg{}). Additionally, we show how to reduce server-to-client communication by sending a history of updates accumulated since the client's last participation.

\section{\ourmethod{} for private and efficient communication in FL}
Federated learning has been described by \citet{mcmahan2016communication} in the form of the \fedavg{} algorithm. At each communication round $t$, the server sends the current model parameters $\bv{w}^{(t)}$ to a subset $S^{(t)}$ of all clients $S$. Each chosen client $s$ updates the server-provided model $\bv{w}^{(t)}$, \emph{e.g.}, via stochastic gradient descent, to better fit its local dataset $D_s=\{d_{si}\}_{i=1}^{N_s}$ with a given loss function
\begin{align}
    \mathcal{L}_s(\mathcal{D}_s, \bv{w}) := \frac{1}{N_s}\sum_{i=1}^{N_s} L(d_{si}, \bv{w}).
\end{align}
After $E$ epochs of optimization on the local dataset, the client-side optimization procedure results in an updated model $\bv{w}^{(t)}_s$, based on which the client computes its update to the global model
\begin{align}
    \label{eq:fedavg_client_delta}
    \bs{\Delta}^{(t)}_s = \bv{w}^{(t)}_s - \bv{w}^{(t)};
\end{align}
and sends it to the server. The server then aggregates client-specific updates to get the new global model
$
    \bv{w}^{(t+1)} =\bv{w}^{(t)} + \frac{1}{|S^{(t)}|}\sum_s \bs{\Delta}^{(t)}_s = \frac{1}{|S^{(t)}|}\sum_s \bv{w}^{(t)}_s.
$
Outside the context of differential privacy, client updates are usually weighted by the size $N_s$ of the local dataset. A generalization of this server-side scheme \citep{reddi2020adaptive} interprets $\frac{1}{|S^{(t)}|}\sum_{s \in S^{(t)}} \bs{\Delta}^{(t)}_s$ as a ``gradient'' for the server-side model and introduces more advanced updating schemes, such as Adam \citep{kingma2014adam}. 

Federated training involves repeated communication of model updates from clients to the server and vice versa. These updates can reveal sensitive information about the client data, so there is a need for formal privacy guarantees. The total communication cost can be significant, thus constraining FL to the use of unmetered channels, such as WiFi. Hence, compressing the update messages in a privacy-preserving way plays an important role in moving FL to a truly mobile use-case. In the following sections, we first describe the lossy version of Relative Entropy Coding (REC) \citep{flamich2020compressing}, then show how to extend it to the FL scenario before discussing it in the context of differential privacy. Finally, we show how \ourmethod{} can be used to additionally compress server-to-client messages.

\subsection{REC for efficient communication}
\label{sec:compression_general}
Lossy REC, and its predecessor minimal random code learning (MIRACLE) \citep{havasi2018minimal}, have been originally proposed as a way to compress a random sample $\bv{w}$ from a distribution $q_{\bs{\phi}}(\bv{w})$ parameterized with $\bs{\phi}$, \emph{i.e.}, $\bv{w} \sim q_{\bs{\phi}}(\bv{w})$, by using information that is ``shared'' between the sender and the receiver. 
This information is given in terms of a shared prior distribution $p_{\bs{\theta}}(\bv{w})$ with parameters $\bs{\theta}$ along with a shared random seed $R$. 
The sender proceeds by generating $K$ independent random samples, $\bv{w}_1, \dots,\bv{w}_K$, from the prior distribution $p_{\bs{\theta}}(\bv{w})$ according to the random seed $R$. 
Subsequently, it forms a categorical distribution $\tilde{q}_{\bs{\pi}}(\bv{w}_{1:K})$ over the $K$ samples with the probability of each sample being proportional to the likelihood ratio $\pi_k \propto q_{\bs{\phi}}(\bv{w} = \bv{w}_k)/p_{\bs{\theta}}(\bv{w} = \bv{w}_k)$. Finally, it draws a random sample $\bv{w}_{k^*}$ from $\tilde{q}_{\bs{\pi}}(\bv{w}_{1:K})$, corresponding to the $k^*$'th sample drawn from the shared prior. The sender can then communicate to the receiver the \textit{index} $k^*$ with $\log_2 K$ bits. On the receiver side, $\bv{w}_{k^*}$ can be reconstructed by initializing the random number generator with $R$ and sampling the first $k^*$ samples from $p_{\bs{\theta}}(\bv{w})$. These procedures are described in Algorithms~\ref{alg:miracle_encode} and ~\ref{alg:miracle_decode}. 

\citet{havasi2018minimal} set $K$ to the exponential of the Kullback-Leibler (KL) divergence of $q_{\bs{\phi}}(\bv{w})$ to the prior $p_{\bs{\theta}}(\bv{w})$  with an extra constant $c$, \emph{i.e.}, $K = \exp(\text{KL}(q_{\bs{\phi}}(\bv{w}) || p_{\bs{\theta}}(\bv{w})) + c)$. In this case, the message length is at least $\mathcal{O}(\text{KL}(q_{\bs{\phi}}(\bv{w}) || p_{\bs{\theta}}(\bv{w})))$. This can be theoretically motivated based on the work by~\citet{harsha2007communication}; when the sender and the receiver share a source of randomness, under some assumptions, this $\text{KL}$ divergence is a lower bound on the expected message length~\citep{flamich2020compressing}. This brings forth an intuitive notion of compression that connects the compression rate with the amount of additional information encoded in $q_{\bs{\phi}}(\bv{w})$ relative to the information in $p_{\bs{\theta}}(\bv{w})$; the smaller the amount of extra information the shorter the message length will be and, in the extreme case where $q_{\bs{\phi}}(\bv{w}) = p_{\bs{\theta}}(\bv{w})$, the message length will be $\mathcal{O}(1)$. Of course, achieving this efficiency is meaningless if the bias of this procedure is high; fortunately, \citet{havasi2018minimal} show that for appropriate values of $c$ and under mild assumptions, the bias, namely $|\mathbb{E}_{\tilde{q}_{\bs{\pi}}(\bv{w}_{1:K})}[f] - \mathbb{E}_{q_{\bs{\phi}}(\bv{w})}[f]|$ for arbitrary functions $f$, can be sufficiently small. In all of our subsequent discussions, we parametrize $K$ as a function of a binary bit-width $b$, \emph{i.e.}, $K=2^b$, and treat $b$ as the hyperparameter.

\begin{figure}[t]
\begin{minipage}[t]{0.48\textwidth}
\vspace{-\baselineskip}
\begin{algorithm}[H]
\centering
\caption{The sender-side algorithm for lossy REC with the shared random seed $R$, the shared prior $p_{\bs{\theta}}(\bv{w})$ and the number of prior samples $K$.}\label{alg:miracle_encode}
\begin{algorithmic}
\State Set state of pseudo-RNG to $R$
\State Draw $\bv{w}_1, \dots, \bv{w}_K \stackrel{\text{i.i.d.}}{\sim} p_{\bs{\theta}}(\bv{w})$
\State $\alpha_k \gets q_{\bs{\phi}}(\bv{w} = \bv{w}_k)/p_{\bs{\theta}}(\bv{w} = \bv{w}_k)$
\State $\pi_k \gets \alpha_k /\sum_j \alpha_j$
\State $\bv{w}_{k^*} \sim \tilde{q}_{\bs{\pi}}(\bv{w}_{1:K})$\\
\Return $k^*$ encoded with $\log_2 K$ bits
\end{algorithmic}
\end{algorithm}
\end{minipage}
\hfill
\begin{minipage}[t]{0.48\textwidth}
\vspace{-\baselineskip}
\begin{algorithm}[H]
\centering
\caption{The receiver-side algorithm for lossy REC.}\label{alg:miracle_decode}
\begin{algorithmic}
\State Receive $k^*$ from the sender
\State Set state of pseudo-RNG to $R$
\State Draw $\bv{w}_1, \dots, \bv{w}_{k^*} \stackrel{\text{i.i.d.}}{\sim} p_{\bs{\theta}}(\bv{w})$
\State Use $\bv{w}_{k^*}$ for downstream task
\end{algorithmic}
\end{algorithm}
\end{minipage}
\end{figure}

\subsection{REC for efficient communication in FL}
\label{sec:compression}
We can adapt this procedure to FL by appropriately choosing distributions over client-to-server messages (model updates), $q^{(t)}_{\bs{\phi}_s}(\bs{\Delta}^{(t)}_s)$ , along with the prior distribution $p^{(t)}_{\bs{\theta}}(\bs{\Delta}^{(t)}_s)$ on each round $t$:
\begin{align}
\label{eq:define_p_q}
    p ^{(t)}_{\bs{\theta}}(\bs{\Delta}^{(t)}_s) := \mathcal{N}(\bs{\Delta}^{(t)}_s | \bv{0}, \sigma^2\bv{I}), \qquad q^{(t)}_{\bs{\phi}_s}(\bs{\Delta}^{(t)}_s) := \mathcal{N}(\bs{\Delta}^{(t)}_s | \bv{w}^{(t)}_s - \bv{w}^{(t)}, \sigma^2\bv{I}),
\end{align}
\emph{i.e.}, for the prior we use a Gaussian distribution centered at zero with appropriately chosen $\sigma$ and for the message distribution we opt for a Gaussian with the same standard deviation centered at the model update.
The form of $q$ is chosen to provide a plug-in solution to potentially resource constrained devices, as well as to readily satisfy the differential privacy constrains discussed in Section \ref{sec:privacy_mechanism}.
Note that, as opposed to the \fedavg{} client update definition in \eqref{eq:fedavg_client_delta}, here we consider $\bs{\Delta}^{(t)}_s$ to be a random variable and the difference $\bv{w}^{(t)}_s - \bv{w}^{(t)}$ to be the mean of the client-update distribution $q$ over $\bs{\Delta}^{(t)}_s$.

Let us now see how communication efficiency is realized in the FL pipeline. The length of the message will be a function of how much ``extra'' information about the local dataset $D_s$ is encoded in $\bv{w}^{(t)}_{s}$, measured by KL divergence. As we show later, this has a nice interplay with DP: the DP constraints bound the amount of information encoded in each update, resulting in highly compressible messages. It is also worth noting that this procedure can be done parameter-wise (\emph{i.e.}, communicate $\log_2 K$ bits per parameter), layer-wise ($\log_2 K$ bits for each layer in the network) or even network-wise ($\log_2 K$ bits in total). Any arbitrary intermediate vector size is also possible. This is done by splitting $\bs{\Delta}^{(t)}_s$ into $M$ independent groups (which is possible due to our assumption of factorial distributions over the dimensions of the vector) and applying compression independently to each group (see also Theorem~\ref{thm:Theorem3}). If we have many groups (\textit{e.g.}, when we perform per-parameter compression), we can boost the compression even further by entropy coding the indices with their empirical distribution.

\subsection{Private \& efficient communication for FL with \ourmethod{}}
\label{sec:privacy_mechanism}
In order to make the compression procedure described in Section~\ref{sec:compression} differentially private, we need to bound the sensitivity of the mechanism and quantify the noise inherent to it. Similarly to \dpfedavg{}, bounding the sensitivity consists of clipping the norm of client updates $\bv{w}^{(t)}_s - \bv{w}^{(t)}$. In the context of REC, this means that the client message distribution $q_{\bs{\phi}}^{(t)}$ cannot be too different from the server prior $p_{\bs{\theta}}^{(t)}$ in any given round $t$. After this step, instead of explicitly injecting additional noise to the updates, we make use of the fact that the procedure in itself is stochastic. Two sources of randomness play a role in each round $t$: (1) drawing $K$ samples from the prior $p_{\bs{\theta}}^{(t)}$; (2) drawing an update from the importance sampling distribution $\tilde{q}_{\bs{\pi}}^{(t)}$. We coin the name Differentially-Private Relative Entropy Coding (\ourmethod{}) for the resulting mechanism, outlined in Algorithms~\ref{alg:fedpc_encode} and ~\ref{alg:fedpc_decode}.

\begin{minipage}[t]{0.6\textwidth}
\vspace{-\baselineskip}
\begin{algorithm}[H]
\centering
\caption{The client-side algorithm for \ourmethod{} at a given round $t$. $R_s^{(t)}$ is the client-chosen shared random seed, $\bv{w}^{(t)}$ is the server model, $\bv{w}^{(t)}_s$ is the fine-tuned model of client $s$, $K$ is the number of prior samples, $C$ is the clipping threshold and $\sigma$ is the prior standard deviation.}\label{alg:fedpc_encode}
\begin{algorithmic}
\State $R_s^{(t)} \gets$ choose and set client-specific seed for round $t$
\State $\bs{\phi}^{(t)}_s \gets \bv{w}^{(t)}_s - \bv{w}^{(t)}$
\State $\bs{\Delta}_1, \dots, \bs{\Delta}_K \stackrel{\text{i.i.d.}}{\sim} \mathcal{N}(\bs{\Delta}^{(t)}_s| \bv{0}, \sigma^2\bv{I})$
\State $\hat{\bs{\phi}}^{(t)}_s = \bs{\phi}^{(t)}_s \min(1, C / \|\bs{\phi}^{(t)}_s\|_2)$
\State $\alpha_k \gets \mathcal{N}(\bs{\Delta}^{(t)}_s = \bs{\Delta}_k| \hat{\bs{\phi}}^{(t)}_s, \sigma^2\bv{I})/\mathcal{N}(\bs{\Delta}^{(t)}_s = \bs{\Delta}_k| \bs{0}, \sigma^2\bv{I})$
\State $\pi_k \gets \alpha_k /\sum_j \alpha_j$
\State $\bs{\Delta}_{k^*} \sim \tilde{q}^{(t)}_{\bs{\pi}}(\bs{\Delta}_{1:K})$\\
\Return $k^*$ encoded with $\log_2 K$ bits, $R_s^{(t)}$
\end{algorithmic}
\end{algorithm}
\end{minipage}
\hfill
\begin{minipage}[t]{0.38\textwidth}
\vspace{-\baselineskip}
\begin{algorithm}[H]
\centering
\caption{The server side algorithm for \ourmethod{}.}\label{alg:fedpc_decode}
\begin{algorithmic}
\State Receive $k^*, R_s^{(t)}$ from client $s$
\State Set state of pseudo-RNG to $R_s^{(t)}$
\State $\bs{\Delta}_1, \dots, \bs{\Delta}_{k^*} \stackrel{\text{i.i.d.}}{\sim} \mathcal{N}(\bs{\Delta}^{(t)}_s| \bv{0}, \sigma^2\bv{I})$
\State Use $\bs{\Delta}_{k^*}$ for downstream task
\end{algorithmic}
\end{algorithm}
\end{minipage}

Finally, to prove that \ourmethod{} is differentially private and calculate the corresponding $\varepsilon, \delta$, we build upon prior work in privacy accounting for ML and FL; particularly, on R\'enyi differential privacy (RDP)~\citep{mironov2017renyi} and the moments accountant~\citep{abadi2016deep}. Our analysis in the next section follows the established approach based on tail bounds of the privacy loss random variable, addressing such important differences as two sources of randomness and non-Gaussianity of $\tilde{q}_{\bs{\pi}}^{(t)}$.
Additional discussion on other aspects of our method can be found in Appendices~\ref{sec:app_additional_discussion} and~\ref{sec:app_private_mean_estimation}.

\subsection{Privacy Analysis of \ourmethod{}}
\label{sec:privacy_analysis}
A possible avenue to analyze privacy for \ourmethod{} is to derive DP bounds  relying on Theorem 3.2 of \citet{havasi2018minimal}. However, obtaining reasonable guarantees can be challenging, as the probability bound in~\citep[Theorem 3.2]{havasi2018minimal} has to be incorporated in $\delta$ and it is at least $e^{-0.125 \ln K}$. As such, a fairly common value $\delta = 10^{-5}$ would require ${\scriptscriptstyle~^\sim} e^{92}$ samples from the prior. To overcome this issue, we consider an importance sampling bound by \citet[Section~2.2]{agapiou2017importance}. It scales (inversely) linearly with the number of samples, helping for smaller sample sizes.

Let us restate the bound for completeness. For some test function $\zeta$, one can characterize the bound between the true expectation and the importance sampling estimate in the following way:
\begin{align}
    &\sup_{|\zeta| < 1} \left| \mathbb{E} \left[ \mu^N (\zeta) - \mu(\zeta) \right] \right| \leq \frac{12}{K} e^{\mathcal{D}_2(q_{\bs{\phi}} || p_{\theta})}, \label{eq:sup_ex_bound}
\end{align}
where $K$ is the number of samples, $\mu^N(\cdot) = \mathbb{E}_{\tilde{q}_{\bs{\pi}}}[\cdot]$, $\mu(\cdot) = \mathbb{E}_{q_{\bs{\phi}}}[\cdot]$ and $\mathcal{D}_2 (\cdot||\cdot)$ denotes the R\'enyi divergence of order $2$ between the target $q_{\bs{\phi}}$ and the proposal $p_{\bs{\theta}}$. Our choice of $\zeta$ (\emph{cf.} Appendix \ref{sec:proof_of_main_theorem}) guarantees $|\zeta| < 1$ for any input and enables the use of this bound.

Theorem~\ref{the:Theorem1} summarizes our main theoretical result. It enables \ourmethod{} to combine the server-side privacy accounting in the central model of DP with extreme model update compression, while allowing clients to privatize their updates locally and protect against an honest-but-curious server.
\begin{theorem}
\label{the:Theorem1}
After $T$ rounds, with the client-to-server bitrate $b$, \ourmethod{} is differentially private with 
\begin{align*}
    \delta \leq \min_\lambda e^{-\lambda\varepsilon} \prod_{t=1}^T e^{2\lambda c_{\lambda+1}^{(t)}} + \frac{12}{2^b} \sum_{t=1}^{T} e^{c_2^{(t)}},
\end{align*}
where the constant $c_\lambda^{(t)} \geq \max \left\{ \mathcal{D}_\lambda \left(q_{\bs{\phi}}^{(t)} || p_{\bs{\theta}}^{(t)} \right), \mathcal{D}_\lambda \left(p_{\bs{\theta}}^{(t)} || q_{\bs{\phi}}^{(t)} \right) \right\}$ is the upper bound on R\'enyi divergences of order $\lambda$ between the client model update distribution $q_{\bs{\phi}}^{(t)}$ and the server prior $p_{\bs{\theta}}^{(t)}$ in any given round $t$ (in both directions).
\end{theorem}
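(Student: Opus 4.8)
The plan is to follow the moments-accountant recipe of \citet{abadi2016deep}, with two modifications forced by the structure of \ourmethod{}: in round $t$ the output is an index into the shared prior samples, which by post-processing is the decoded update $\bs{\Delta}_{k^*}$ drawn from the importance-sampling law $\tilde{q}^{(t)}_{\bs{\pi}}$ rather than from the Gaussian target $q^{(t)}_{\bs{\phi}}$; and the per-round randomness has two pieces, the $K=2^b$ i.i.d.\ draws from the prior and the categorical draw of $k^*$. First I would fix a worst-case pair of adjacent client datasets, reducing client-level DP of the whole run to a statement about the law of the $T$ per-round outputs; since the per-round randomness is fresh, the composed privacy loss is the sum of the per-round losses and its log-MGF composes additively (adaptively, since $\bs{\phi}^{(t)}_s$ depends on the history). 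Clipping to norm $C$ makes $q^{(t)}_{\bs{\phi}}$ close to the prior in every realization, so $\mathcal{D}_\lambda(q^{(t)}_{\bs{\phi}}\|p^{(t)}_{\bs{\theta}})\le c^{(t)}_\lambda$; I would use $p^{(t)}_{\bs{\theta}}$ as the pivot, since it is exactly the REC output when the differing client contributes the prior mean and hence carries no approximation error on that side.

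Next I would write the quantity to certify as $\delta \ge \sup_{E}\bigl(\Pr_{M(D)}(E) - e^{\varepsilon}\Pr_{M(D')}(E)\bigr) = \mathbb{E}_{o\sim M(D)}\bigl[(1-e^{\varepsilon-L(o)})_{+}\bigr]$, where $L$ is the privacy loss of the composed mechanism and the integrand $h$ takes values in $[0,1)$. Because $h$ is bounded by $1$ it is an admissible test function for the importance-sampling bound \eqref{eq:sup_ex_bound}: peeling off one round at a time and replacing the actual output law $\tilde{q}^{(t)}_{\bs{\pi}}$ by the target $q^{(t)}_{\bs{\phi}}$, each swap costs at most $\tfrac{12}{K}e^{\mathcal{D}_2(q^{(t)}_{\bs{\phi}}\|p^{(t)}_{\bs{\theta}})}\le \tfrac{12}{2^b}e^{c^{(t)}_{2}}$, and summing over $t$ gives exactly the second term $\tfrac{12}{2^b}\sum_t e^{c^{(t)}_{2}}$.

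It then remains to bound $\delta_{\mathrm{ideal}}:=\mathbb{E}_{o\sim\widehat{M}(D)}[h(o)]$ for the idealized mechanism $\widehat{M}=\bigotimes_t q^{(t)}_{\bs{\phi}}$. Using $h\le e^{-\lambda\varepsilon}e^{\lambda L}$ on the bad event and additivity of the log-MGF, $\delta_{\mathrm{ideal}}\le \min_\lambda e^{-\lambda\varepsilon}\prod_{t=1}^{T}\mathbb{E}[e^{\lambda L^{(t)}}]$, and I would bound each factor by splitting the per-round privacy loss, conditioned on the shared seed, into the ratio of categorical probabilities $\log(\pi_{k^*}/\pi'_{k^*})$. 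This decomposes as $\log\bigl(q^{(t)}_{\bs{\phi}}(\bs{\Delta}_{k^*})/p^{(t)}_{\bs{\theta}}(\bs{\Delta}_{k^*})\bigr)$, whose moment generating function under the target is the Gaussian Rényi quantity $e^{\lambda\mathcal{D}_{\lambda+1}(q^{(t)}_{\bs{\phi}}\|p^{(t)}_{\bs{\theta}})}\le e^{\lambda c^{(t)}_{\lambda+1}}$, plus a contribution from the normalizers $\sum_k\alpha_k$ that I would control by the order-$\lambda$ divergence, yielding the factor $e^{(\lambda-1)c^{(t)}_{\lambda}}$. Multiplying the two, taking the product over $t$, and optimizing over $\lambda$ gives the first term of the claimed bound, and adding the Agapiou error term completes the proof.

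The hard part is the last step: the true per-round privacy loss is non-Gaussian, and a naive treatment of the random normalizer $S=\sum_k\alpha_k$ (which can be atypically small) produces bounds that blow up like $2^\lambda$. The crux is to show that, up to the slack captured by $c^{(t)}_{\lambda}$ and the separate importance-sampling error, the $\lambda$-th moment of $e^{L^{(t)}}$ factors cleanly into the Gaussian moment $e^{\lambda c^{(t)}_{\lambda+1}}$ — i.e.\ that the finite-$K$ correction only costs the extra $(\lambda-1)c^{(t)}_{\lambda}$ in the exponent — and to pin down the bounded test function $\zeta$ in \eqref{eq:sup_ex_bound} so that it matches the DP bad event and $|\zeta|<1$ holds for every input.
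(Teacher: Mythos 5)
Your overall route coincides with the paper's: reduce client-level DP to a worst-case adjacent pair with the prior (uniform importance-sampling law) as pivot; peel off the importance-sampling approximation one round at a time via the bound \eqref{eq:sup_ex_bound} applied to a $[0,1]$-valued test function, which costs exactly $\tfrac{12}{2^b}\sum_t e^{c_2^{(t)}}$; then run a Chernoff/moments-accountant argument on the remaining idealized expectation, with the per-round categorical privacy loss split into the Gaussian log-ratio $\log(q_{\bs{\phi}}/p_{\bs{\theta}})$ and the log of the normalizer. The only structural deviation is that you start from the hockey-stick integrand $(1-e^{\varepsilon-L})_+$ where the paper uses the cruder tail event $\mathbbm{1}_{\{L>\varepsilon\}}$; both are bounded by $1$ (so admissible for \eqref{eq:sup_ex_bound}) and by $e^{\lambda(L-\varepsilon)}$, so they land in the same place.

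The genuine gap is the step you yourself flag as the crux and then assert rather than prove: controlling the $\lambda$-th moment of the reciprocal normalizer. Writing $S=\frac{1}{K}\sum_k \alpha_k$, the relevant factor is $\mathbb{E}_{\tilde{q}_{\bs{\pi}}}[p_{\bs{\theta}}/q_{\bs{\phi}}]=1/S$, and you are right that lower-bounding $S$ directly fails because $S$ can be atypically small. But the paper's resolution is not a concentration or divergence-transfer argument at all --- it is an exact algebraic cancellation. One inserts $1=S\cdot\mathbb{E}_{\tilde{q}_{\bs{\pi}}}[p_{\bs{\theta}}/q_{\bs{\phi}}]$ to raise the power from $\lambda$ to $\lambda+1$, applies Jensen to push the power inside the inner expectation, and then observes that the prefactor $S$ cancels the denominator of the categorical weights $\pi_k$, leaving $\frac{1}{K}\sum_k \frac{q_{\bs{\phi}}(\bs{\Delta}_k)}{p_{\bs{\theta}}(\bs{\Delta}_k)}\bigl(p_{\bs{\theta}}(\bs{\Delta}_k)/q_{\bs{\phi}}(\bs{\Delta}_k)\bigr)^{\lambda+1}$, whose expectation over $\bs{\Delta}_{1:K}\sim p_{\bs{\theta}}$ is exactly $e^{\lambda\mathcal{D}_{\lambda+1}(p_{\bs{\theta}}\|q_{\bs{\phi}})}$ --- independent of $K$, with no blow-up. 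Note this is an order-$(\lambda+1)$ divergence in the \emph{reverse} direction, not the order-$\lambda$ quantity you guessed; the theorem's $(\lambda-1)c_\lambda+\lambda c_{\lambda+1}$ form is then recovered because $c_\lambda$ is defined at the end of the proof as a maximum over both directions. Without this identity your plan cannot complete the per-round moment bound, so it should be stated and proved as the key lemma rather than left as a bookkeeping remark. A smaller omission: after the importance-sampling swap the selected sample is drawn fresh from $q_{\bs{\phi}}^{(t)}$ independently of the $K$ prior draws, and it is precisely this independence that lets the per-round moment factor into the product of your two expectations; that justification should be made explicit.
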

\begin{proof}
See Appendix \ref{sec:proof_of_main_theorem}.
\end{proof}

An attentive reader may notice a strong resemblance between Theorem~\ref{the:Theorem1} and the moments accountant~\citep{abadi2016deep} and R\'enyi DP~\citep{mironov2017renyi} results. Indeed, it turns out that the \ourmethod{} compression procedure yields the privacy loss that can be bound by almost the double of the normal continuous Gaussian mechanism loss, plus a small overhead, and composed in a very similar manner.

\textbf{Shared random seed}~ Prior work relying on shared random seeds has been shown to be vulnerable to attacks due to the possibility of ``inverting'' the noise~\citep{kairouz2021distributed}. \ourmethod{} is not susceptible to this for two reasons. First, the randomness of the method comes from two sources instead of one, and knowing one seed does not permit to fully reconstruct the un-noised sample. In our guarantee, $\delta$ corresponds to the probability of the mechanism failure w.r.t. both sources of randomness, making it as secure as the standard Gaussian mechanism even when one of the seeds is shared. However, this alone is not sufficient because the shared seed $R$ could be manipulated to generate a set of samples on which to encode the update such that the guarantee would not hold (\emph{e.g.}, in the tails of the privacy loss distribution). Hence, we add the second line of defense as we shift the task of picking $R$ to a client, who then transmits the seed along with the index $k^*$ for decoding. This way, neither the server nor any other entity can manipulate the seed to break the privacy guarantee.

\textbf{Privacy amplification}~
Like many recent approaches, including~\citep{kairouz2021distributed}, \ourmethod{} can be seen as a hybrid method, privatizing updates locally and providing an amplified central guarantee. A key difference, however, is that our method does not calibrate noise to the aggregate. Namely, the scale of noise and the guarantee do not depend on whether a thousand, a hundred, or just one client is sampled in a round. This is due to non-additive nature of our mechanism, which does not let us easily benefit from the variance reduction as the Gaussian mechanism does. As a result, we always calibrate to an ``aggregate'' of one client. While this property increases the necessary noise, it has an upside of allowing stronger privacy amplification. Substituting the typical sampling \emph{without replacement} by sampling \emph{with replacement}, we can use the amplification factor of $1/|S|$ while accounting for $T|S'|$ rounds, instead of $|S'|/|S|$ while accounting for $T$ rounds, and obtain a tighter overall guarantee.

\textbf{Secure aggregation}~
One of the considerations in FL is that the server might be honest-but-curious rather than fully trusted. In \dpfedavg{}~\citep{mcmahan2017learning}, the noise addition is delegated to the server, and thus, secure aggregation~\citep{bonawitz2017practical} is necessary to prevent the server from receiving client updates in the clear. \ddgauss{}~\citep{kairouz2021distributed} and analogous methods mitigate this problem by shifting the noise addition to the client side, but still use secure aggregation for the reasons stated in the previous paragraph: noise is calibrated to the aggregate and may be insufficient to protect some clients. In \ourmethod{}, because of the noise calibration to individual updates, we believe that the provided local guarantee (which can be computed by removing subsampling amplification) is sufficient against non-malicious servers. To further boost privacy protection, an important future work direction is integrating the \ourmethod{} compression scheme with secure aggregation.

\textbf{Interplay of privacy and compression}~ Theorem \ref{the:Theorem1} relates privacy to our compression scheme in an intuitive manner. In order to get meaningful privacy guarantees, we have to bound the R\'enyi divergence in any given round $t$ for any $\lambda \geq 1$, which limits the amount of information encoded in $q_{\bs{\phi}}^{(t)}$ relative to $p_{\bs{\theta}}^{(t)}$. As a result, enforcing DP guarantees also implies that our scheme will have highly compressible messages. We formalize this interplay in the following lemma.
    
\begin{lemma}
\label{the:Theorem2}
Consider compressing the expected value of $\zeta$ under the $q_{\phi_s}(\bs{\Delta}_s)$ at \eqref{eq:define_p_q}, and let $\xi$ be a desired average compression bias of \rec{} for $\zeta$. To achieve this target, a sufficient bitrate $b$ is at most
\begin{align*}
    b \leq \log_{2}{12} + \frac{1}{\log{2}}\mathcal{D}_2\left(q_{\bs{\phi}_s} || p_{\theta}\right) - \log_{2}{\frac{\xi}{G}},
\end{align*}
where $|\zeta| \leq G$ and $\mathcal{D}_2\left(q_{\bs{\phi}_s} || p_{\theta}\right)$, and thus the maximum bitrate $b$, is controlled by the chosen $(\varepsilon, \delta)$.
\end{lemma}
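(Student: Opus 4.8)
The plan is to start from the importance-sampling bias bound~\eqref{eq:sup_ex_bound}, which states that for any test function with $|\zeta| < 1$ we have
\begin{align*}
\sup_{|\zeta| < 1} \left| \mathbb{E}\left[ \mu^N(\zeta) - \mu(\zeta) \right] \right| \leq \frac{12}{K} e^{\mathcal{D}_2(q_{\bs{\phi}} || p_{\theta})}.
\end{align*}
First I would handle the normalization: since the lemma allows $|\zeta| \leq G$ rather than $|\zeta| < 1$, I would apply the bound to the rescaled function $\zeta / G$, which satisfies the required constraint, and note that the bias scales linearly, so the bias for $\zeta$ itself is at most $\frac{12 G}{K} e^{\mathcal{D}_2(q_{\bs{\phi}_s} || p_{\theta})}$. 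Setting this quantity to be at most the target bias $\xi$ gives the inequality $\frac{12 G}{K} e^{\mathcal{D}_2(q_{\bs{\phi}_s} || p_{\theta})} \leq \xi$.

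Next I would solve for $K$: rearranging yields $K \geq \frac{12 G}{\xi} e^{\mathcal{D}_2(q_{\bs{\phi}_s} || p_{\theta})}$. Then, using the parametrization $K = 2^b$ introduced in Section~\ref{sec:compression_general}, taking $\log_2$ of both sides shows that it suffices to pick
\begin{align*}
b \geq \log_2 12 + \log_2 G - \log_2 \xi + \log_2 e^{\mathcal{D}_2(q_{\bs{\phi}_s} || p_{\theta})} = \log_2 12 + \frac{1}{\log 2}\mathcal{D}_2(q_{\bs{\phi}_s} || p_{\theta}) - \log_2 \frac{\xi}{G},
\end{align*}
where the change of base converts the natural-log Rényi divergence to base~$2$ via the factor $1/\log 2$. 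This establishes the stated bound on the sufficient bitrate. The final clause of the lemma — that $\mathcal{D}_2(q_{\bs{\phi}_s} || p_{\theta})$, and hence $b$, is controlled by the chosen $(\varepsilon, \delta)$ — follows from Theorem~\ref{the:Theorem1}: the constant $c_2^{(t)}$ upper-bounding the order-$2$ Rényi divergence appears directly in the $\delta$ expression, so fixing a target $(\varepsilon, \delta)$ budget forces an upper bound on $\mathcal{D}_2$, which I would just cite rather than re-derive.

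The only genuinely delicate point is checking that the specific choice of $\zeta$ used here (referenced as being spelled out in Appendix~\ref{sec:proof_of_main_theorem}) is indeed a legitimate test function for~\eqref{eq:sup_ex_bound} after rescaling, i.e.\ that it is bounded by $G$ and that "average compression bias for $\zeta$" is precisely the quantity $|\mathbb{E}[\mu^N(\zeta) - \mu(\zeta)]|$ that the Agapiou et al.\ bound controls; everything else is elementary algebra and a change of logarithm base. I expect no real obstacle beyond making that correspondence explicit and being careful that the inequality directions are consistent (the bound gives a \emph{sufficient} bitrate, hence the "at most" phrasing for the minimal $b$ that works).
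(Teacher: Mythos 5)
Your proposal is correct and follows exactly the route the paper sketches in its one-line proof: apply the bound \eqref{eq:sup_ex_bound}, substitute $K=2^b$, rescale $\zeta$ by $G$ to meet the $|\zeta|<1$ hypothesis, and rearrange with the base change $\log_2 e^{\mathcal{D}_2} = \mathcal{D}_2/\log 2$. Your closing remarks on the inequality direction and on citing Theorem~\ref{the:Theorem1} for the $(\varepsilon,\delta)$ control of $\mathcal{D}_2$ are consistent with the paper's intent, so there is nothing to add.
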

\begin{proof}
The proof follows from \eqref{eq:sup_ex_bound}, by setting $K=2^b$, rearranging the terms and adjusting for $G$.
\end{proof}
We also verify the claim empirically (Figure~\ref{fig:bitwidth_ablation}), fixing the bound on the R\'enyi divergence and showing that the performance with extreme compression (\emph{i.e.}, $7$ bits$/$tensor) and no compression is similar. 
Finally, to confirm that various granularity of compression (per-parameter, per-layer, per-network, etc.) does not interfere with our privacy analysis, we introduce and prove the following theorem.

\begin{theorem}[Compression-by-parts]
\label{thm:Theorem3}
Expectation of an arbitrary function $\zeta(\bs{\Delta}^{[1]}, \ldots, \bs{\Delta}^{[M]})$ over the importance sampling distributions $q_{\bs{\pi}}^{[1]}, \ldots, q_{\bs{\pi}}^{[M]}$, built for $M$ non-intersecting parameter groups independently, is equivalent to the expectation over the joint distribution $q_{\bs{\pi}}$ built on $2^{\sum_{i=1}^{M}b_i}$ samples:
\begin{align*}
    \mathbb{E}_{\bs{\Delta}^{[1]} \sim q_{\bs{\pi}}^{[1]}} \left[ \ldots \mathbb{E}_{\bs{\Delta}^{[M]} \sim q_{\bs{\pi}}^{[M]}} \left[ \zeta(\bs{\Delta}^{[1]}, \ldots, \bs{\Delta}^{[M]}) \right] \ldots \right] = \mathbb{E}_{\bs{\Delta}^{[1:M]} \sim q_{\bs{\pi}}} \left[ \zeta(\bs{\Delta}^{[1:M]}) \right].
\end{align*}
\end{theorem}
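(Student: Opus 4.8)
The plan is to peel the definitions apart and recognise that the product of the per-group categoricals is literally a single \rec{} categorical over the Cartesian product of the per-group candidate pools. Recall that $q_{\bs{\pi}}^{[i]}$ is the law of the \rec{} output for group $i$ (marginalising over the random pool): one draws $K_i := 2^{b_i}$ i.i.d.\ candidates $\bs{\Delta}_1^{[i]},\dots,\bs{\Delta}_{K_i}^{[i]} \sim p_{\bs{\theta}}^{[i]}$, sets $\alpha_k^{[i]} := q_{\bs{\phi}}^{[i]}(\bs{\Delta}_k^{[i]})/p_{\bs{\theta}}^{[i]}(\bs{\Delta}_k^{[i]})$ and $\pi_k^{[i]} := \alpha_k^{[i]}/\sum_j \alpha_j^{[i]}$, and returns $\bs{\Delta}_{k_i^\ast}^{[i]}$ with $k_i^\ast \sim \mathrm{Cat}(\bs{\pi}^{[i]})$. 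The nested expectation on the left therefore runs $M$ such procedures with mutually independent randomness, both for the pools and for the selections, so the left-hand side equals $\mathbb{E}_{\bs{\Delta}^{[1:M]}\sim \bigotimes_i q_{\bs{\pi}}^{[i]}}[\zeta]$.

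First I would record the elementary identity that a product of categoricals is a categorical over the product index set: the tuple $(k_1^\ast,\dots,k_M^\ast)$ has joint probability $\prod_i \pi_{k_i}^{[i]}$, and $\sum_{k_1,\dots,k_M}\prod_i \pi_{k_i}^{[i]} = \prod_i \sum_{k_i}\pi_{k_i}^{[i]} = 1$, so this is a valid categorical $\bs{\pi}$ on the $\prod_i K_i = 2^{\sum_i b_i}$ tuples. Then, invoking the factorial form of the prior and the proposal in \eqref{eq:define_p_q}, i.e.\ $p_{\bs{\theta}} = \prod_i p_{\bs{\theta}}^{[i]}$ and $q_{\bs{\phi}} = \prod_i q_{\bs{\phi}}^{[i]}$, I would rewrite the unnormalised weight of a tuple as $\prod_i \alpha_{k_i}^{[i]} = \prod_i \frac{q_{\bs{\phi}}^{[i]}(\bs{\Delta}_{k_i}^{[i]})}{p_{\bs{\theta}}^{[i]}(\bs{\Delta}_{k_i}^{[i]})} = \frac{q_{\bs{\phi}}(\bs{\Delta}_{k_1}^{[1]},\dots,\bs{\Delta}_{k_M}^{[M]})}{p_{\bs{\theta}}(\bs{\Delta}_{k_1}^{[1]},\dots,\bs{\Delta}_{k_M}^{[M]})}$, and its normaliser as $\prod_i \sum_{k_i}\alpha_{k_i}^{[i]} = \sum_{l_1,\dots,l_M}\prod_i \alpha_{l_i}^{[i]}$. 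Hence $\pi_{(k_1,\dots,k_M)}$ is exactly the \rec{} importance weight attached to the candidate vector $(\bs{\Delta}_{k_1}^{[1]},\dots,\bs{\Delta}_{k_M}^{[M]})$ within the pool of $2^{\sum_i b_i}$ product candidates: the product of the per-group categoricals \emph{is} the joint importance-sampling categorical $q_{\bs{\pi}}$ built on $2^{\sum_i b_i}$ samples, those samples being the Cartesian product of the $M$ independently drawn per-group pools.

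To close, I would take the expectation over the randomness of the pools. Because the $M$ sub-pools are generated independently, the law of the joint \rec{} output factorises across groups: on a product event $A_1\times\cdots\times A_M$ one gets $\mathbb{E}\big[\sum_{k_1,\dots,k_M}\pi_{(k_1,\dots,k_M)}\prod_i \mathbbm{1}[\bs{\Delta}_{k_i}^{[i]}\in A_i]\big] = \prod_i \mathbb{E}\big[\sum_{k_i}\pi_{k_i}^{[i]}\mathbbm{1}[\bs{\Delta}_{k_i}^{[i]}\in A_i]\big] = \prod_i q_{\bs{\pi}}^{[i]}(A_i)$, so $q_{\bs{\pi}} = \bigotimes_{i=1}^M q_{\bs{\pi}}^{[i]}$ as distributions over the concatenated vector. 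Writing $\mathbb{E}_{\bs{\Delta}^{[1:M]}\sim q_{\bs{\pi}}}[\zeta]$ as an integral against this product measure and unfolding it into the iterated integral (Fubini) then reproduces exactly the nested expectation on the left, which is the claimed identity. A one-line corollary worth stating is that this legitimises applying Lemma~\ref{the:Theorem2} / \eqref{eq:sup_ex_bound} to the by-parts scheme with the aggregate budget $K = 2^{\sum_i b_i}$.

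The main obstacle is not depth but getting the right reading of the statement: one must interpret ``$q_{\bs{\pi}}$ built on $2^{\sum_i b_i}$ samples'' as the Cartesian product of the per-group pools, with coordinates shared across product candidates, rather than $2^{\sum_i b_i}$ fresh i.i.d.\ draws of the whole vector --- those two constructions have genuinely different output laws in general --- and to check that the product construction is nonetheless a bona fide instance of \rec{} precisely because a product candidate's importance weight depends only on its own coordinates and therefore factorises. A secondary care point is interleaving the expectation over the random pools with the expectation over the categorical selections correctly, so that the group-wise factorisation is applied to the right joint object; the factorial assumption on $q_{\bs{\phi}}$ and $p_{\bs{\theta}}$ is exactly what makes all of this go through.
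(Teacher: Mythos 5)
Your proposal is correct and follows essentially the same route as the paper's proof: expand the nested expectations over the per-group categoricals, use the factorial form of $q_{\bs{\phi}}$ and $p_{\bs{\theta}}$ to merge the per-group weights and normalizers into a single categorical over multi-indices ranging over the Cartesian product of the pools, and recognize the result as the joint importance-sampling expectation on $2^{\sum_i b_i}$ candidates. Your explicit remark that ``built on $2^{\sum_i b_i}$ samples'' must be read as the product of the per-group pools (the paper's multi-index $k_\ast=(k_1,\ldots,k_M)$) rather than fresh i.i.d.\ draws of the whole vector is exactly the reading the paper's proof uses implicitly.
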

\begin{proof}
See Appendix \ref{sec:compression_by_parts}.
\end{proof}

\subsection{Compressing server-to-client messages}
\label{sec:compressingServerToClient}
The compression procedure described in Section \ref{sec:compression} is a specific example of (stochastic) vector quantization where the shared codebook is determined by a shared random seed. Here we show how the principle of communicating indices into such a shared codebook additionally allows for the compression of the server-to-client communication. Instead of sending the full server-side model to a specific client, the server can choose to collect all updates to the global model in-between two subsequent rounds that the client participates in. Based on this history of codebook indices, the client can deterministically reconstruct the current state of the server model before beginning local optimization. Clearly, the expected length of the history is proportional to the total number of clients and the amount of client subsampling we perform during training. At the beginning of a round, the server can therefore compare the bit-size of the history and choose to send the full-precision model $\bv{w}^{(t)}$ instead. 
Taking a model with $1k$ parameters as an example, a single uncompressed model update is approximately equal to $4k$ communicated indices when using $8$-bit codebook compression of the whole model. 
Crucially, compressing server-to-client messages this way has no influence on the DP nature of \ourmethod{} since the local model of DP ensures privacy of client information (and the stronger central guarantee applies as long as the secrecy of the sample is preserved within the updates history).

For clients participating in their first round of training, the first seed without accompanying indices can be understood as seeding the random initialization of the server-side model. Algorithms \ref{alg:miracle_server_to_client_compression} and \ref{alg:miracle_server_to_client_decode} (in Appendix \ref{sec:AppStoCcompression}) describe this scheme. It is important to note that the client-side update rule must be equal to the server-side update rule, \emph{i.e.} in generalized \fedavg{} \citep{reddi2020adaptive} it might be necessary to additionally send the optimizer state when sending the current global model $\bv{w}^{(t)}$.

\section{Related work}
\label{sec:related_work}
The privacy promise of federated learning relies heavily on the use of additional techniques, such as differential privacy and secure multi-party computation, to provide rigorous theoretical guarantees. Without these techniques, FL has been shown to be vulnerable to attacks~\citep{geiping2020inverting} and unintended leakage of sensitive information~\citep{thakkar2020understanding}. One of the main open challenges is to reduce the communication cost while preserving DP guarantees~\citep{kairouz2019advances}.

\citet{mcmahan2017learning} outlined \dpfedavg{}, which has since been the staple of DP federated learning. In this default scheme, clients clip norms of their updates before submitting them to the server (preferably, using a secure aggregation protocol~\citep{bonawitz2017practical}). The server then completes the DP mechanism by adding Gaussian noise to the aggregate of multiple clients. Without secure aggregation, this method allows clients to compress their updates and reduce communication, but becomes vulnerable to a malicious or honest-but-curious server. One could use the local model of DP~\citep{dwork2014algorithmic} instead of the central model to address this issue: clients would add noise locally before sending the updates. But this leads to a pronounced drop in accuracy due to the larger scale of noise necessary in the local model~\citep{kairouz2019advances}. The few existing examples that use the local model operate on very large numbers of clients and updates~\citep{pihur2018differentially}.

To overcome this problem of excessive noise, researchers are increasingly looking in the direction of hybrid solutions~\citep{shi2011privacy,rastogi2010differentially,agarwal2018cpsgd,truex2019hybrid}. The key idea of these techniques is to reduce the variance of the locally added noise by taking into account the larger global number of clients and accounting DP in the central model. Since the smaller noise variance is insufficient to protect individual updates, secure aggregation is necessary for these methods. In this context, the local noise distributions have to be discrete, such that their sum after secure aggregation is also discrete and is of known shape, allowing the server to compute guarantees centrally. As a result, until recently, most of these methods relied on the binomial distribution. Compared to the traditional Gaussian mechanisms, it is at a disadvantage because it does not yield Renyi or concentrated DP, and thus cannot benefit from tighter adaptive composition and amplification through sampling, which is particularly important in ML applications~\citep{kairouz2021distributed}. \citet{kairouz2021distributed} presented a novel analysis of the discrete Gaussian mechanism~\citep{canonne2020discrete} in terms of concentrated DP. Their mechanism, named Discrete Distributed Gaussian (\ddgauss{}), is adapted to the context of FL, can provide accuracy comparable to the centralized Gaussian mechanism, and enables parameter-wise quantization. However, at higher compression rates (\emph{e.g.}, $12$ bits per parameter), \ddgauss{} fails to train a model for reasonable $\varepsilon$ values.

Finally, there is recent work featuring extreme compression rates for DP algorithms, bearing a resemblance to our solution. \citet{girgis2020shuffled} proposed to use a locally private mechanism along with a secure shuffler to communicate models one (privatized) parameter at a time. It compresses client messages to $\log d$ bits for a model of $d$ parameters. This approach, however, seems to require a large number of clients to participate in a single round ($10$k for their MNIST experiment), which is impractical in realistic scenarios and detrimental to the total communication cost (upload + download). 
\citet{chen2021breaking} achieve similarly high compression rates as~\citep{girgis2020shuffled}; however, they do not consider the context of federated learning but rather focus on frequency and mean estimation. As we did not have an FL baseline for \citep{chen2021breaking} and could not reliably reproduce the results of \citet{girgis2020shuffled}, we focused our comparison with these methods on mean estimation. Details can be found in Appendix~\ref{sec:app_private_mean_estimation}. In short, we find that the method of \cite{chen2021breaking} has an edge over \ourmethod{} if a good prior is lacking, making it preferable for low-information, ``one-shot'' scenarios, such as mean estimation. On the other hand, with a good prior (or one that is learned during training), \ourmethod{} performs better and thus is more appropriate in federated settings.

\section{Experiments}
\label{sec:experiments}
We evaluate \ourmethod{} on several non-i.i.d. FL tasks that provide client-level privacy guarantees: image classification on a non-i.i.d. split of MNIST~\citep{lecun1998gradient} into $100$ clients and FEMNIST~\citep{caldas2018leaf} into $3.5$k clients, along with next character prediction on the Shakespeare dataset~\citep{caldas2018leaf} with LSTMs~\citep{hochreiter1997long} and $660$ clients as well as tag prediction on the StackOverflow (SO-LR) dataset~\citep{stackoverflow2019} with a logistic regression model~\citep{kairouz2021distributed} and $342477$ clients. For baselines, we consider \dpfedavg{} as the gold standard without compression and \ddgauss{} as a baseline that involves parameter quantization. Both of these methods target central DP guarantees and employ secure aggregation. All exeriments were implemented in PyTorch \citep{NEURIPS2019_9015}. We provide experimental details in Appendix \ref{sec:AppExpDetails}.

\subsection{Results and discussion}
\label{sec:experiments_privacy}
We plot the global model accuracy for different privacy budgets $\varepsilon$ as a function of the total communication cost in Figure~\ref{fig:comm_vs_acc}; this highlights how efficiently each method spends bits of communication in order to reach a target accuracy. Due to the substantial compression by \ourmethod{} relative to the baselines, we use \emph{log-scale} for the x-axis of communication costs. Additionally, Figure~\ref{fig:priv_vs_acc} in Appendix~\ref{sec:AppAdditionalResults} shows the accuracy achieved as a function of the privacy budget $\varepsilon$; this highlights how efficiently each method spends its privacy budget in order to reach a specific target accuracy. Note that in certain cases the training is stopped before convergence, due to exhausted privacy budget. 

In Table~\ref{tab:acc_comm_res}, we report the final model performance and total communication costs for different privacy budgets. It should be mentioned that the evaluation loop for StackOverflow is prohibitively expensive to be done in each round due to having ${\scriptscriptstyle~^\sim}1.6$M datapoints. We thus pick $10$k random datapoints for evaluation during training to plot the learning curves (not necessarily the same for each run), following~\citep{reddi2020adaptive}. The numbers in Table~\ref{tab:acc_comm_res} refer to the model performance at the end of training, where we evaluate on the entire test set for \dpfedavg{} and \ourmethod{}. For \ddgauss{}, since \citet{kairouz2021distributed} do not report the model performance on the full test set, we use the numbers shown at the end of their plot. Finally, results for some settings are omitted because of either (i) convergence issues for stricter privacy budgets (a typical phenomenon for $\varepsilon < 1$ on smaller federations); (ii) not being reported by the related work we compare with; or (iii) not being necessary in light of sufficient performance under better privacy guarantees (e.g., on StackOverflow).

\begin{table*}[t]
\centering
\setlength{\tabcolsep}{3pt}
\caption{Performance ($\pm$ standard error obtained via multiple runs) and total communication (in GB),  achieved for different privacy guarantees $\varepsilon$. Results for \ddgauss{} (marked \ddg{$x$} for $x$ bits) are taken from~\citep{kairouz2021distributed}, which uses a slightly smaller version of the FEMNIST dataset ($3.4k$ clients instead of $3.5k$). }
\resizebox{0.4\textwidth}{!}{ 
\begin{tabular}[t]{lcc}
\toprule
{\bf MNIST} & \dpfedavg{} & \ourmethod{} \\
\midrule
Acc. ($\varepsilon\!=\!3$) & $82.1\pm0.8$ & $69.5\pm1.5$ \\
Acc. ($\varepsilon\!=\!6$) & $90.0\pm0.4$ & $83.7\pm0.6$ \\
\emph{Comm.} & \emph{43} & \emph{0.01} \\
\end{tabular}}
~
\resizebox{0.55\textwidth}{!}{ 
\begin{tabular}[t]{lcccc}
\toprule
{\bf FEMNIST} & \dpfedavg{} & \ddg{16} & \ddg{12} & \ourmethod{} \\
\midrule
Acc. ($\varepsilon\!=\!1$) & $66.0\pm0.2$ & -- & -- & $59.7\pm0.3$ \\
Acc. ($\varepsilon\!=\!3$) & $74.6\pm0.1$ & ${\scriptscriptstyle~^\sim}72$ & ${\scriptscriptstyle~^\sim}25$ & $67.1\pm0.2$ \\
Acc. ($\varepsilon\!=\!6$) & $75.6\pm0.1$ & ${\scriptscriptstyle~^\sim}77$ & ${\scriptscriptstyle~^\sim}71$ & $69.2\pm0.3$ \\
\emph{Comm.} & \emph{259} & \emph{194} & \emph{178} & \emph{14.2} \\
\end{tabular}}
\resizebox{0.4\textwidth}{!}{ 
\begin{tabular}[t]{lcc}
\toprule
{\bf Shakespeare} & \dpfedavg{} & \ourmethod{} \\
\midrule
Acc. ($\varepsilon\!=\!3$) & $39.0\pm0.1$ & $29.2\pm0.2$ \\
\emph{Comm.} & \emph{81} & \emph{0.1} \\
\bottomrule
\end{tabular}
}
\resizebox{0.55\textwidth}{!}{ 
\begin{tabular}[t]{lcccc}
\toprule
{\bf SO LR} & \dpfedavg{} & \ddg{16} & \ddg{12} & \ourmethod{} \\
\midrule
R@5 ($\varepsilon\!=\!1)$ & $19.3\pm0.0$ & ${\scriptscriptstyle~^\sim}14$ & ${\scriptscriptstyle~^\sim}10$ & $18.4\pm0.7$ \\
\emph{Comm.} & \emph{3356} & \emph{2517} & \emph{2307} & \emph{32} \\
\bottomrule
\end{tabular}}
\label{tab:acc_comm_res}
\end{table*}

\begin{figure}[t]
    \centering
    \begin{subfigure}{0.41\textwidth}
        \centering
        \includegraphics[width=\textwidth]{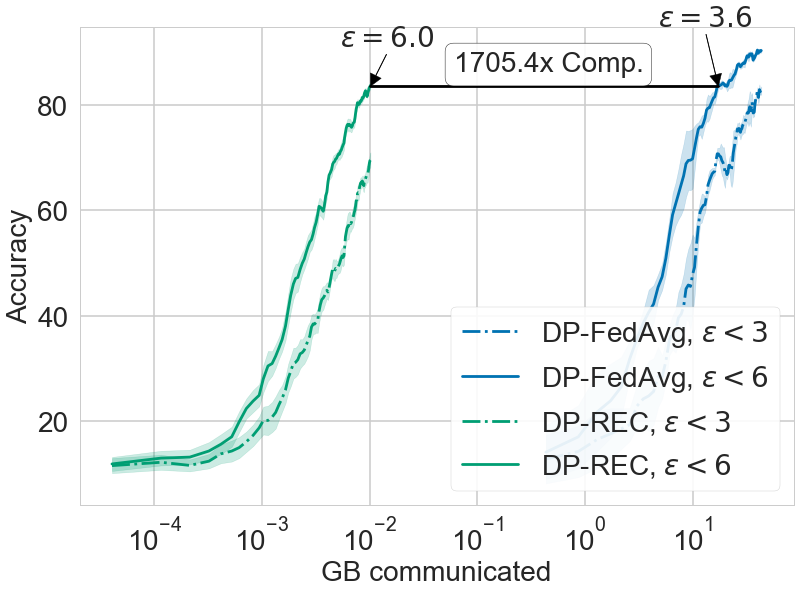}
        \caption{MNIST}
        \label{fig:mnist_communication_accuracy}
    \end{subfigure}%
    \begin{subfigure}{0.41\textwidth}
        \centering
        \includegraphics[width=\textwidth]{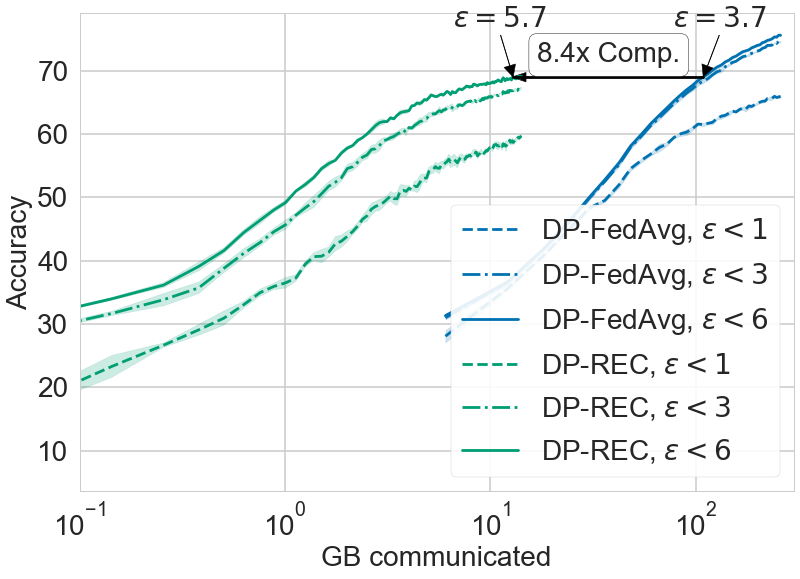}
        \caption{FEMNIST}
        \label{fig:femnist_communication_accuracy}
    \end{subfigure}\\
    \begin{subfigure}{0.41\textwidth}
        \centering
        \includegraphics[width=\textwidth]{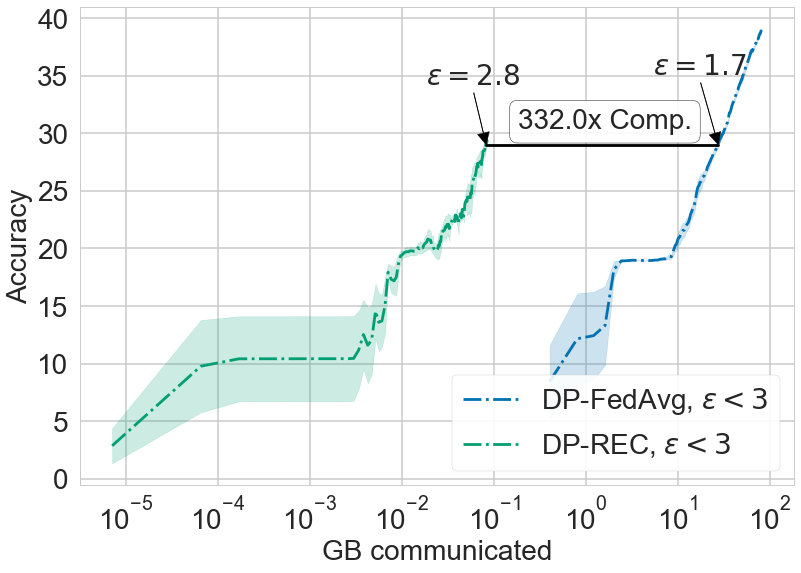}
        \caption{Shakespeare}
        \label{fig:shakespeare_communication_accuracy}
    \end{subfigure}%
    \begin{subfigure}{0.41\textwidth}
        \centering
        \includegraphics[width=\textwidth]{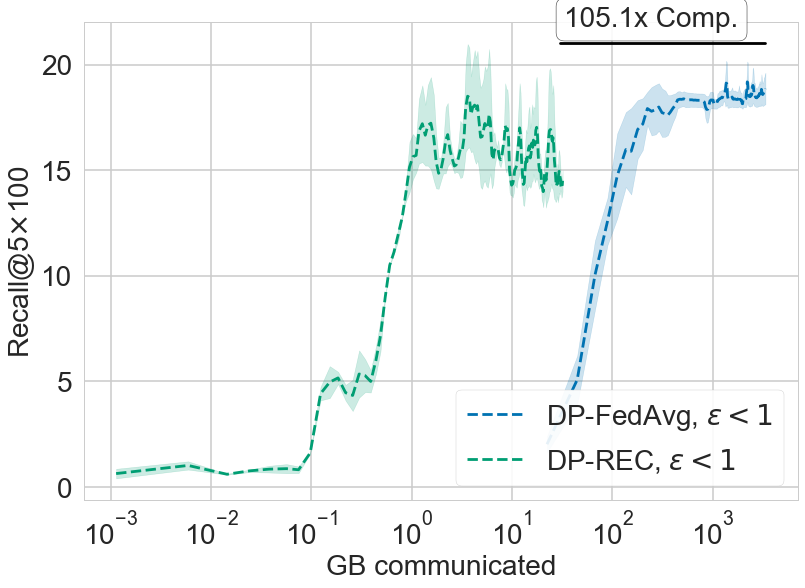}
        \caption{StackOverflow}
        \label{fig:stackoverflow_communication_accuracy}
    \end{subfigure}
    \caption{Test accuracy (\%) with $\pm 1$ standard error as a function of communication (in GB, in \emph{log-scale}). For StackOverflow, we report compression at the end of training, as the learning curves are based on different subsets of the validation set.} \label{fig:comm_vs_acc}
\end{figure}

Observing the experimental results, we clearly see the trade-offs. \ourmethod{} can drastically reduce the \emph{total communication costs} (download and upload) of federated training depending on the subsampling rate and amount of clients in a federation. In some cases, we can get extreme reduction, such as MNIST with $4300$x compression (end-points of curves on Figure~\ref{fig:comm_vs_acc}). Otherwise, the reduction is still several orders of magnitude ($105$x for SO-LR and $18$x on FEMNIST). 
Nevertheless, \dpfedavg{} and \ddgauss{} (depending on the task) can reach better model performance for a given privacy budget. This is primarily due to two reasons. Firstly, both \dpfedavg{} and \ddgauss{} add noise calibrated to the number of clients in a given round to get a central DP guarantee on the aggregate; in contrast, \ourmethod{} calibrates noise to the contribution of individual updates (see Section~\ref{sec:privacy_analysis}). The signal-to-noise-ratio for the model updates is thus worse for \ourmethod{}. Secondly, in \ourmethod{} we have to clip client updates more aggressively to account for the privacy loss overhead incurred from the REC compression (roughly double for each iteration compared to Gaussian mechanism). This can be mitigated in larger federations, since the clipping can be reduced based on stronger privacy amplification for the central model of DP. For example, observe the StackOverflow experiment, where the accuracy delta between \ourmethod{} and \dpfedavg{} is smaller. Furthermore, it is precisely in those cases that reducing the communication cost is important from a practical perspective. When we target a \emph{specific accuracy} within the reach of both \ourmethod{} and \dpfedavg{}, we compress between $8.4$x (FEMNIST) to $1705$x (MNIST), at the additional cost of $2$ to $2.4$ in $\varepsilon$, relative to \dpfedavg{}.

\paragraph{Comparison with local DP guarantees}
The nature of the \ourmethod{} privacy mechanism requires calibrating noise to individual client updates rather than their contribution to the aggregate. This property warrants a different kind of comparison: equalizing local guarantees for each individual update (\emph{i.e.}, as seen when the secrecy of the sample in client sampling is not preserved). We use our non-i.i.d. FEMNIST setting with $1.5$k rounds and tune the noise of \dpfedavg{} such that a single client update, without any privacy amplification, has the same $\varepsilon_{\text{local}}$ guarantee. When training \dpfedavg{} with a local-DP guarantee that \ourmethod{} obtains when targeting central-DP with $\varepsilon = 3$,  we see that \dpfedavg{} obtains $68.9\%$ accuracy, whereas \ourmethod{} gets $63.4\%$. On a setting where we target the local-DP guarantee that \ourmethod{} gets when aiming for central-DP with $\varepsilon = 1$, \dpfedavg{} achieves $60.1\%$ accuracy compared to \ourmethod{}'s $57.4\%$. In both cases, \ourmethod{} achieves an overall compression rate of $72.5$x with $7$-bits per tensor, while losing $2.7\%$ to $5.5\%$ in accuracy compared to \dpfedavg{} (depending on the privacy target) due to more aggressive clipping. It is also worth noting that the performance delta relative to the results shown in Table~\ref{tab:acc_comm_res} is smaller, highlighting the negative effects of calibrating the noise to an ``aggregate'' of a single client.

\begin{wrapfigure}[16]{R}{0.36\textwidth}
    \vspace{-1em}
    \centering
    \includegraphics[width=0.36\textwidth]{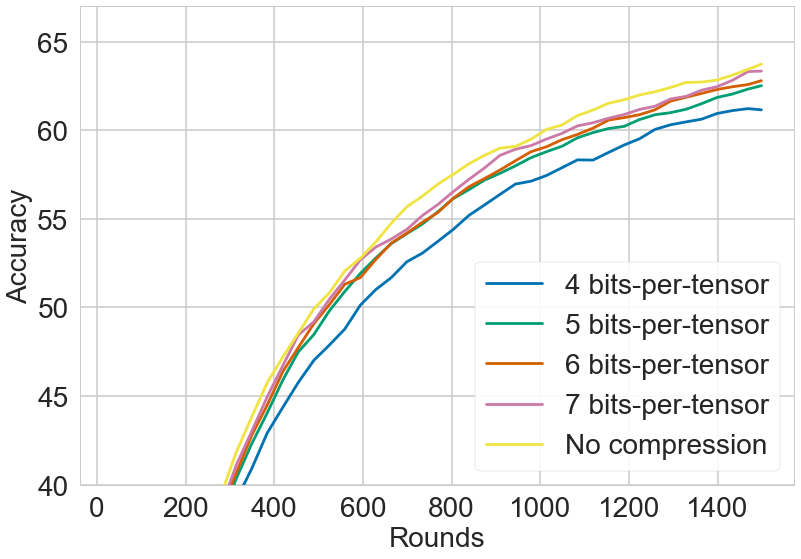}
    \caption{Effect of the bit-width on model performance with \ourmethod{} for clipping  targeting $\varepsilon = 3$ after $1.5$k rounds. The ``no-compression'' variant communicates a random sample directly from $q_{\bs{\phi}}^{(t)}$ instead of compressing it via \ourmethod{}.}
    \label{fig:bitwidth_ablation}
\end{wrapfigure}

\paragraph{Compressibility of differentially private updates} As we noted in Section~\ref{sec:privacy_analysis}, differential privacy has a positive interplay with the \rec{} compression scheme. DP upper-bounds the R\'enyi divergence for any order $\lambda$ between the proposal distribution $p_{\bs{\theta}}^{(t)}$ and the model update distribution $q_{\bs{\phi}}^{(t)}$ and, according to lemma~\ref{the:Theorem2}, this limits the maximum bitrate $b$ of information to be communicated. In order to empirically verify this claim, we consider the non-i.i.d. FEMNIST classification with \ourmethod{}, and hyperparameters that target a model with $\varepsilon = 3$ after $1.5$k rounds. We then maintain the same bound on R\'enyi divergence and vary the number of bits per tensor from $4$ to $7$. We also include a variant without compression, which directly communicates a random sample from the clipped update distribution $q_{\bs{\phi}}^{(t)}$. 
The results appear to verify our claim and can be seen in Figure~\ref{fig:bitwidth_ablation}. After $4$ bits, we see very small improvements with adding additional bits and, with $7$ bits, the performance is very similar to the uncompressed baseline.

\section{Conclusion}
With \ourmethod{}, we formalized the intuitive notion that $\varepsilon,\delta$ differentially private messages necessarily contain a small amount of information and can therefore be compressed significantly. A bound on the R\'enyi divergence between the server-side prior $p$ and the locally optimized $q$ implies a small message size for \rec{}, elegantly tying together DP and communication efficiency for federated learning. The nature of our bound in Theorem \ref{the:Theorem1} reveals the flipside of \ourmethod{}. As opposed to the standard Gaussian mechanism in central DP, it requires stronger clipping, effectively reducing the utility of models trained with \ourmethod{} for a given privacy budget. Our experiments with the StackOverflow dataset show that these limitations can be mitigated by the stronger privacy amplification in situations with large numbers of clients. Contrary to intuition, spending more bits to communicate updates from clients to the server cannot recover this utility as the necessary information is lost in clipping, \emph{i.e.}, before forming the importance sampling distribution $\tilde{q}$.

There are several important directions left for future work. The first would be to investigate the convergence of \ourmethod{}. Intuitively, we can expect the algorithm to be convergent if \dpfedavg{} is convergent. The main difference of \ourmethod{} is that the gradient is sampled using importance weights rather than the true Gaussian probabilities. Asymptotically, these two distributions should be close, and the bias of the compressed gradients would be bounded by a small value (as discussed in Section~\ref{sec:privacy_analysis}). Empirically, we observe no issues with convergence. The second would be to investigate secure shuffling, similarly to \citet{girgis2020shuffled} as a way to amplify our privacy further; this can lead to less aggressive clipping and thus improved performance for a given privacy budget.

We expect our work to have an overall beneficial societal impact through its main contributions of communication reduction and differential privacy. One potential concern worth mentioning is the interplay between differential privacy and fairness in FL, an actively researched open problem. \citet{bagdasaryan2019differential} demonstrate that DP training can have a disproportionate effect on underrepresented and more complex sub-populations, resulting in a disparate accuracy reduction. At the same time, \citet{hooker2020characterising} showed that compression of models can have negative impacts by amplifying biases. In the context of FL, compression is performed on model updates instead of a final model, therefore requiring further research on its influence on bias amplification.

\bibliography{bibliography}

\begin{thebibliography}{42}
\providecommand{\natexlab}[1]{#1}
\providecommand{\url}[1]{\texttt{#1}}
\expandafter\ifx\csname urlstyle\endcsname\relax
  \providecommand{\doi}[1]{doi: #1}\else
  \providecommand{\doi}{doi: \begingroup \urlstyle{rm}\Url}\fi

\bibitem[Abadi et~al.(2016)Abadi, Chu, Goodfellow, McMahan, Mironov, Talwar,
  and Zhang]{abadi2016deep}
Martin Abadi, Andy Chu, Ian Goodfellow, H~Brendan McMahan, Ilya Mironov, Kunal
  Talwar, and Li~Zhang.
\newblock Deep learning with differential privacy.
\newblock In \emph{Proceedings of the 2016 ACM SIGSAC Conference on Computer
  and Communications Security}, pp.\  308--318, 2016.

\bibitem[Agapiou et~al.(2017)Agapiou, Papaspiliopoulos, Sanz-Alonso, and
  Stuart]{agapiou2017importance}
Sergios Agapiou, Omiros Papaspiliopoulos, Daniel Sanz-Alonso, and AM~Stuart.
\newblock Importance sampling: Intrinsic dimension and computational cost.
\newblock \emph{Statistical Science}, pp.\  405--431, 2017.

\bibitem[Agarwal et~al.(2018)Agarwal, Suresh, Yu, Kumar, and
  Mcmahan]{agarwal2018cpsgd}
Naman Agarwal, Ananda~Theertha Suresh, Felix Yu, Sanjiv Kumar, and H~Brendan
  Mcmahan.
\newblock cpsgd: Communication-efficient and differentially-private distributed
  sgd.
\newblock \emph{arXiv preprint arXiv:1805.10559}, 2018.

\bibitem[Andrew et~al.(2019)Andrew, Thakkar, McMahan, and
  Ramaswamy]{andrew2019differentially}
Galen Andrew, Om~Thakkar, H~Brendan McMahan, and Swaroop Ramaswamy.
\newblock Differentially private learning with adaptive clipping.
\newblock \emph{arXiv preprint arXiv:1905.03871}, 2019.

\bibitem[Bagdasaryan et~al.(2019)Bagdasaryan, Poursaeed, and
  Shmatikov]{bagdasaryan2019differential}
Eugene Bagdasaryan, Omid Poursaeed, and Vitaly Shmatikov.
\newblock Differential privacy has disparate impact on model accuracy.
\newblock In H.~Wallach, H.~Larochelle, A.~Beygelzimer, F.~d\textquotesingle
  Alch\'{e}-Buc, E.~Fox, and R.~Garnett (eds.), \emph{Advances in Neural
  Information Processing Systems}, volume~32. Curran Associates, Inc., 2019.
\newblock URL
  \url{https://proceedings.neurips.cc/paper/2019/file/fc0de4e0396fff257ea362983c
  2dda5a-Paper.pdf}.

\bibitem[Bonawitz et~al.(2017)Bonawitz, Ivanov, Kreuter, Marcedone, McMahan,
  Patel, Ramage, Segal, and Seth]{bonawitz2017practical}
Keith Bonawitz, Vladimir Ivanov, Ben Kreuter, Antonio Marcedone, H~Brendan
  McMahan, Sarvar Patel, Daniel Ramage, Aaron Segal, and Karn Seth.
\newblock Practical secure aggregation for privacy-preserving machine learning.
\newblock In \emph{proceedings of the 2017 ACM SIGSAC Conference on Computer
  and Communications Security}, pp.\  1175--1191, 2017.

\bibitem[Caldas et~al.(2018)Caldas, Duddu, Wu, Li, Kone{\v{c}}n{\`y}, McMahan,
  Smith, and Talwalkar]{caldas2018leaf}
Sebastian Caldas, Sai Meher~Karthik Duddu, Peter Wu, Tian Li, Jakub
  Kone{\v{c}}n{\`y}, H~Brendan McMahan, Virginia Smith, and Ameet Talwalkar.
\newblock Leaf: A benchmark for federated settings.
\newblock \emph{arXiv preprint arXiv:1812.01097}, 2018.

\bibitem[Canonne et~al.(2020)Canonne, Kamath, and Steinke]{canonne2020discrete}
Cl{\'e}ment Canonne, Gautam Kamath, and Thomas Steinke.
\newblock The discrete gaussian for differential privacy.
\newblock \emph{arXiv preprint arXiv:2004.00010}, 2020.

\bibitem[Chen et~al.(2020)Chen, Kairouz, and Ozgur]{chen2021breaking}
Wei-Ning Chen, Peter Kairouz, and Ayfer Ozgur.
\newblock Breaking the communication-privacy-accuracy trilemma.
\newblock In H.~Larochelle, M.~Ranzato, R.~Hadsell, M.~F. Balcan, and H.~Lin
  (eds.), \emph{Advances in Neural Information Processing Systems}, volume~33,
  pp.\  3312--3324. Curran Associates, Inc., 2020.
\newblock URL
  \url{https://proceedings.neurips.cc/paper/2020/file/222afbe0d68c61de60374b96f1d86715-Paper.pdf}.

\bibitem[Cohen et~al.(2017)Cohen, Afshar, Tapson, and
  Van~Schaik]{cohen2017emnist}
Gregory Cohen, Saeed Afshar, Jonathan Tapson, and Andre Van~Schaik.
\newblock Emnist: Extending mnist to handwritten letters.
\newblock In \emph{2017 International Joint Conference on Neural Networks
  (IJCNN)}, pp.\  2921--2926. IEEE, 2017.

\bibitem[Dwork(2019)]{dwork2019differential}
Cynthia Dwork.
\newblock Differential privacy and the us census.
\newblock In \emph{Proceedings of the 38th ACM SIGMOD-SIGACT-SIGAI Symposium on
  Principles of Database Systems}, pp.\  1--1, 2019.

\bibitem[Dwork et~al.(2014)Dwork, Roth, et~al.]{dwork2014algorithmic}
Cynthia Dwork, Aaron Roth, et~al.
\newblock The algorithmic foundations of differential privacy.
\newblock \emph{Foundations and Trends in Theoretical Computer Science},
  9\penalty0 (3-4):\penalty0 211--407, 2014.

\bibitem[Flamich et~al.(2020)Flamich, Havasi, and
  Hern{\'a}ndez-Lobato]{flamich2020compressing}
Gergely Flamich, Marton Havasi, and Jos{\'e}~Miguel Hern{\'a}ndez-Lobato.
\newblock Compressing images by encoding their latent representations with
  relative entropy coding.
\newblock \emph{Advances in Neural Information Processing Systems}, 33, 2020.

\bibitem[Geiping et~al.(2020)Geiping, Bauermeister, Dr\"{o}ge, and
  Moeller]{geiping2020inverting}
Jonas Geiping, Hartmut Bauermeister, Hannah Dr\"{o}ge, and Michael Moeller.
\newblock Inverting gradients - how easy is it to break privacy in federated
  learning?
\newblock In H.~Larochelle, M.~Ranzato, R.~Hadsell, M.~F. Balcan, and H.~Lin
  (eds.), \emph{Advances in Neural Information Processing Systems}, volume~33,
  pp.\  16937--16947. Curran Associates, Inc., 2020.
\newblock URL
  \url{https://proceedings.neurips.cc/paper/2020/file/c4ede56bbd98819ae6112b20ac6
  bf145-Paper.pdf}.

\bibitem[Girgis et~al.(2020)Girgis, Data, Diggavi, Kairouz, and
  Suresh]{girgis2020shuffled}
Antonious~M Girgis, Deepesh Data, Suhas Diggavi, Peter Kairouz, and
  Ananda~Theertha Suresh.
\newblock Shuffled model of federated learning: Privacy, communication and
  accuracy trade-offs.
\newblock \emph{arXiv preprint arXiv:2008.07180}, 2020.

\bibitem[Harsha et~al.(2007)Harsha, Jain, McAllester, and
  Radhakrishnan]{harsha2007communication}
Prahladh Harsha, Rahul Jain, David McAllester, and Jaikumar Radhakrishnan.
\newblock The communication complexity of correlation.
\newblock In \emph{Twenty-Second Annual IEEE Conference on Computational
  Complexity (CCC'07)}, pp.\  10--23. IEEE, 2007.

\bibitem[Havasi et~al.(2018)Havasi, Peharz, and
  Hern{\'a}ndez-Lobato]{havasi2018minimal}
Marton Havasi, Robert Peharz, and Jos{\'e}~Miguel Hern{\'a}ndez-Lobato.
\newblock Minimal random code learning: Getting bits back from compressed model
  parameters.
\newblock \emph{arXiv preprint arXiv:1810.00440}, 2018.

\bibitem[Henighan et~al.(2020)Henighan, Kaplan, Katz, Chen, Hesse, Jackson,
  Jun, Brown, Dhariwal, Gray, Hallacy, Mann, Radford, Ramesh, Ryder, Ziegler,
  Schulman, Amodei, and McCandlish]{henighan2020scaling}
Tom Henighan, Jared Kaplan, Mor Katz, Mark Chen, Christopher Hesse, Jacob
  Jackson, Heewoo Jun, Tom~B. Brown, Prafulla Dhariwal, Scott Gray, Chris
  Hallacy, Benjamin Mann, Alec Radford, Aditya Ramesh, Nick Ryder, Daniel~M.
  Ziegler, John Schulman, Dario Amodei, and Sam McCandlish.
\newblock Scaling laws for autoregressive generative modeling, 2020.

\bibitem[Hochreiter \& Schmidhuber(1997)Hochreiter and
  Schmidhuber]{hochreiter1997long}
Sepp Hochreiter and J{\"u}rgen Schmidhuber.
\newblock Long short-term memory.
\newblock \emph{Neural computation}, 9\penalty0 (8):\penalty0 1735--1780, 1997.

\bibitem[Hooker et~al.(2020)Hooker, Moorosi, Clark, Bengio, and
  Denton]{hooker2020characterising}
Sara Hooker, Nyalleng Moorosi, Gregory Clark, Samy Bengio, and Emily Denton.
\newblock Characterising bias in compressed models.
\newblock \emph{arXiv preprint arXiv:2010.03058}, 2020.

\bibitem[Hsu et~al.(2019)Hsu, Qi, and Brown]{hsu2019measuring}
Tzu-Ming~Harry Hsu, Hang Qi, and Matthew Brown.
\newblock Measuring the effects of non-identical data distribution for
  federated visual classification.
\newblock \emph{arXiv preprint arXiv:1909.06335}, 2019.

\bibitem[Kairouz et~al.(2019)Kairouz, McMahan, Avent, Bellet, Bennis, Bhagoji,
  Bonawitz, Charles, Cormode, Cummings, et~al.]{kairouz2019advances}
Peter Kairouz, H~Brendan McMahan, Brendan Avent, Aur{\'e}lien Bellet, Mehdi
  Bennis, Arjun~Nitin Bhagoji, Keith Bonawitz, Zachary Charles, Graham Cormode,
  Rachel Cummings, et~al.
\newblock Advances and open problems in federated learning.
\newblock \emph{arXiv preprint arXiv:1912.04977}, 2019.

\bibitem[Kairouz et~al.(2021)Kairouz, Liu, and Steinke]{kairouz2021distributed}
Peter Kairouz, Ziyu Liu, and Thomas Steinke.
\newblock The distributed discrete gaussian mechanism for federated learning
  with secure aggregation.
\newblock \emph{arXiv preprint arXiv:2102.06387}, 2021.

\bibitem[Kaplan et~al.(2020)Kaplan, McCandlish, Henighan, Brown, Chess, Child,
  Gray, Radford, Wu, and Amodei]{kaplan2020scaling}
Jared Kaplan, Sam McCandlish, Tom Henighan, Tom~B Brown, Benjamin Chess, Rewon
  Child, Scott Gray, Alec Radford, Jeffrey Wu, and Dario Amodei.
\newblock Scaling laws for neural language models.
\newblock \emph{arXiv preprint arXiv:2001.08361}, 2020.

\bibitem[Kingma \& Ba(2014)Kingma and Ba]{kingma2014adam}
Diederik~P Kingma and Jimmy Ba.
\newblock Adam: A method for stochastic optimization.
\newblock \emph{arXiv preprint arXiv:1412.6980}, 2014.

\bibitem[Laziuk(2021)]{flurryblog}
Estelle Laziuk.
\newblock Daily ios 14.5 opt-in rate, 2021.
\newblock URL
  \url{https://www.flurry.com/blog/ios-14-5-opt-in-rate-att-restricted-app-tracking-transparency-worldwide-us-daily-latest-update/}.

\bibitem[LeCun et~al.(1998)LeCun, Bottou, Bengio, and
  Haffner]{lecun1998gradient}
Yann LeCun, L{\'e}on Bottou, Yoshua Bengio, and Patrick Haffner.
\newblock Gradient-based learning applied to document recognition.
\newblock \emph{Proceedings of the IEEE}, 86\penalty0 (11):\penalty0
  2278--2324, 1998.

\bibitem[McMahan et~al.(2016)McMahan, Moore, Ramage, Hampson,
  et~al.]{mcmahan2016communication}
H~Brendan McMahan, Eider Moore, Daniel Ramage, Seth Hampson, et~al.
\newblock Communication-efficient learning of deep networks from decentralized
  data.
\newblock \emph{arXiv preprint arXiv:1602.05629}, 2016.

\bibitem[McMahan et~al.(2017)McMahan, Ramage, Talwar, and
  Zhang]{mcmahan2017learning}
H~Brendan McMahan, Daniel Ramage, Kunal Talwar, and Li~Zhang.
\newblock Learning differentially private recurrent language models.
\newblock \emph{arXiv preprint arXiv:1710.06963}, 2017.

\bibitem[Mironov(2017)]{mironov2017renyi}
Ilya Mironov.
\newblock R{\'e}nyi differential privacy.
\newblock In \emph{2017 IEEE 30th Computer Security Foundations Symposium
  (CSF)}, pp.\  263--275. IEEE, 2017.

\bibitem[Mironov et~al.(2019)Mironov, Talwar, and Zhang]{mironov2019renyi}
Ilya Mironov, Kunal Talwar, and Li~Zhang.
\newblock R\'enyi differential privacy of the sampled gaussian mechanism.
\newblock \emph{arXiv preprint arXiv:1908.10530}, 2019.

\bibitem[Paszke et~al.(2019)Paszke, Gross, Massa, Lerer, Bradbury, Chanan,
  Killeen, Lin, Gimelshein, Antiga, Desmaison, Kopf, Yang, DeVito, Raison,
  Tejani, Chilamkurthy, Steiner, Fang, Bai, and Chintala]{NEURIPS2019_9015}
Adam Paszke, Sam Gross, Francisco Massa, Adam Lerer, James Bradbury, Gregory
  Chanan, Trevor Killeen, Zeming Lin, Natalia Gimelshein, Luca Antiga, Alban
  Desmaison, Andreas Kopf, Edward Yang, Zachary DeVito, Martin Raison, Alykhan
  Tejani, Sasank Chilamkurthy, Benoit Steiner, Lu~Fang, Junjie Bai, and Soumith
  Chintala.
\newblock Pytorch: An imperative style, high-performance deep learning library.
\newblock In H.~Wallach, H.~Larochelle, A.~Beygelzimer, F.~d\textquotesingle
  Alch\'{e}-Buc, E.~Fox, and R.~Garnett (eds.), \emph{Advances in Neural
  Information Processing Systems 32}, pp.\  8024--8035. Curran Associates,
  Inc., 2019.
\newblock URL
  \url{http://papers.neurips.cc/paper/9015-pytorch-an-imperative-style-high-performance-deep-learning-library.pdf}.

\bibitem[Pihur et~al.(2018)Pihur, Korolova, Liu, Sankuratripati, Yung, Huang,
  and Zeng]{pihur2018differentially}
Vasyl Pihur, Aleksandra Korolova, Frederick Liu, Subhash Sankuratripati, Moti
  Yung, Dachuan Huang, and Ruogu Zeng.
\newblock Differentially-private" draw and discard" machine learning.
\newblock \emph{arXiv preprint arXiv:1807.04369}, 2018.

\bibitem[Rastogi \& Nath(2010)Rastogi and Nath]{rastogi2010differentially}
Vibhor Rastogi and Suman Nath.
\newblock Differentially private aggregation of distributed time-series with
  transformation and encryption.
\newblock In \emph{Proceedings of the 2010 ACM SIGMOD International Conference
  on Management of data}, pp.\  735--746, 2010.

\bibitem[Reddi et~al.(2020)Reddi, Charles, Zaheer, Garrett, Rush,
  Kone{\v{c}}n{\`y}, Kumar, and McMahan]{reddi2020adaptive}
Sashank Reddi, Zachary Charles, Manzil Zaheer, Zachary Garrett, Keith Rush,
  Jakub Kone{\v{c}}n{\`y}, Sanjiv Kumar, and H~Brendan McMahan.
\newblock Adaptive federated optimization.
\newblock \emph{arXiv preprint arXiv:2003.00295}, 2020.

\bibitem[Shi et~al.(2011)Shi, Chan, Rieffel, Chow, and Song]{shi2011privacy}
Elaine Shi, TH~Hubert Chan, Eleanor Rieffel, Richard Chow, and Dawn Song.
\newblock Privacy-preserving aggregation of time-series data.
\newblock In \emph{Proc. NDSS}, volume~2, pp.\  1--17. Citeseer, 2011.

\bibitem[{TFF Authors}(2019)]{stackoverflow2019}
{TFF Authors}.
\newblock Tensorflow federated stack overflow dataset.
\newblock Online:
  \url{https://www.tensorflow.org/federated/api_docs/python/tff/simulation/datasets/stackoverflow},
  2019.

\bibitem[Thakkar et~al.(2020)Thakkar, Ramaswamy, Mathews, and
  Beaufays]{thakkar2020understanding}
Om~Thakkar, Swaroop Ramaswamy, Rajiv Mathews, and Fran{\c{c}}oise Beaufays.
\newblock Understanding unintended memorization in federated learning.
\newblock \emph{arXiv preprint arXiv:2006.07490}, 2020.

\bibitem[Truex et~al.(2019)Truex, Baracaldo, Anwar, Steinke, Ludwig, Zhang, and
  Zhou]{truex2019hybrid}
Stacey Truex, Nathalie Baracaldo, Ali Anwar, Thomas Steinke, Heiko Ludwig, Rui
  Zhang, and Yi~Zhou.
\newblock A hybrid approach to privacy-preserving federated learning.
\newblock In \emph{Proceedings of the 12th ACM Workshop on Artificial
  Intelligence and Security}, pp.\  1--11, 2019.

\bibitem[Voigt \& Von~dem Bussche(2017)Voigt and Von~dem Bussche]{voigt2017eu}
Paul Voigt and Axel Von~dem Bussche.
\newblock The eu general data protection regulation (gdpr).
\newblock \emph{A Practical Guide, 1st Ed., Cham: Springer International
  Publishing}, 10:\penalty0 3152676, 2017.

\bibitem[Wang et~al.(2021)Wang, Charles, Xu, Joshi, McMahan, Al-Shedivat,
  Andrew, Avestimehr, Daly, Data, et~al.]{wang2021field}
Jianyu Wang, Zachary Charles, Zheng Xu, Gauri Joshi, H~Brendan McMahan, Maruan
  Al-Shedivat, Galen Andrew, Salman Avestimehr, Katharine Daly, Deepesh Data,
  et~al.
\newblock A field guide to federated optimization.
\newblock \emph{arXiv preprint arXiv:2107.06917}, 2021.

\bibitem[Xiao \& Karlin()Xiao and Karlin]{flocwebsite}
Yao Xiao and Josh Karlin.
\newblock Federated learning of cohorts.
\newblock URL \url{https://wicg.github.io/floc/}.

\end{thebibliography}
\bibliographystyle{iclr2022_conference}

\fi

\ifappendix

\newpage
\appendix
\section*{Appendix}
\section{Proofs}

\subsection{Proof of Theorem 1}
\label{sec:proof_of_main_theorem}
\begin{manualtheorem}{1}
After $T$ rounds, with the client-to-server bitrate $b$, \ourmethod{} is differentially private with 
\begin{align*}
    \delta \leq \min_\lambda e^{-\lambda\varepsilon} \prod_{t=1}^T e^{2\lambda c_{\lambda+1}^{(t)}} + \frac{12}{2^b} \sum_{t=1}^{T} e^{c_2^{(t)}},
\end{align*}
where the constant $c_\lambda^{(t)} \geq \max \left\{ \mathcal{D}_\lambda \left(q_{\bs{\phi}}^{(t)} || p_{\bs{\theta}}^{(t)} \right), \mathcal{D}_\lambda \left(p_{\bs{\theta}}^{(t)} || q_{\bs{\phi}}^{(t)} \right) \right\}$ is the upper bound on R\'enyi divergences of order $\lambda$ between the client model update distribution $q_{\bs{\phi}}^{(t)}$ and the server prior $p_{\bs{\theta}}^{(t)}$ in any given round $t$ (in both directions).
\end{manualtheorem}
\begin{proof}
Consider a privacy mechanism $\mathcal{A}: \mathbb{D} \rightarrow \mathbb{R}^d$, mapping a dataset $D \in \mathbb{D}$ to a $d$-dimensional model update $\Delta \in \mathbb{R}^d$. Recall that our mechanism features two sources of randomness: drawing from distributions $p_{\bs{\theta}}$ and then $\tilde{q}_{\bs{\pi}}$ (which is based on $q_{\bs{\phi}}$ and $p_{\bs{\theta}}$).

Similarly to the derivations for the moments accountant~\citep{abadi2016deep}, we can write
\begin{align}
    \Pr[\mathcal{A}(D) \in \Omega] 
        &= \Pr[\mathcal{A}(D) \in \Omega \cap \bar{\mathcal{S}}] + \Pr[\mathcal{A}(D) \in \Omega \cap \mathcal{S}] \\
	    &\leq e^\varepsilon \Pr[\mathcal{A}(D') \in \Omega] + \Pr[\mathcal{A}(D) \in \mathcal{S}] \label{eq:split_probability_by_weights},
\end{align}
where $\Omega \subset \mathbb{R}^d$ is an arbitrary set of outcomes, $\mathcal{S} = \{ \bs{\Delta}_{k^\ast} : L_{\tilde{q}_{\bs{\pi}}}(\bs{\Delta}_{k^\ast}, D, D') > \varepsilon\}$ is the set of outcomes where the bound on privacy loss is violated, and $\bar{\mathcal{S}}$ denotes a complement. For multiple iterations, $k^\ast$ can be viewed as the multi-index picked from the importance sampling distribution across all iterations.

First, let us make a brief side-note on the privacy loss. As we pursue the goal of client-level privacy, we consider two clients with different local datasets $D$ and $D'$. Their update distributions are parameterized by $\bs{\phi}$ for one client and $\bs{\phi}'$ for another: $q_{\bs{\phi}}^{(1:T)}(\cdot)$ and $q_{\bs{\phi}'}^{(1:T)}(\cdot)$. We will denote the corresponding importance sampling distributions as $\tilde{q}_{\bs{\pi}}^{(1:T)}(\cdot)$ and $\tilde{q}_{\bs{\pi}'}^{(1:T)}(\cdot)$. Then the privacy loss for these two clients is
\begin{align}
    L_{\tilde{q}_{\bs{\pi}}^{(1:T)} ~||~ \tilde{q}_{\bs{\pi}'}^{(1:T)}}\left(\bs{\Delta}_{k^\ast}^{(1:T)}, D, D'\right) &= \left| \log \frac{\tilde{q}_{\bs{\pi}}^{(1:T)}\left(\bs{\Delta}_{k^\ast}^{(1:T)}\right)}{\tilde{q}_{\bs{\pi}'}^{(1:T)}\left(\bs{\Delta}_{k^\ast}^{(1:T)}\right)} \right| \\
        &\leq \left| \log \frac{\tilde{q}_{\bs{\pi}}^{(1:T)}(\cdot)}{\tilde{p}_{\bs{\pi}}^{(1:T)}(\cdot)} \right| + \left| \log \frac{\tilde{q}_{\bs{\pi}'}^{(1:T)}(\cdot)}{\tilde{p}_{\bs{\pi}}^{(1:T)}(\cdot)} \right| \\
        &= \left| \log \frac{\tilde{q}_{\bs{\pi}}^{(1:T)}(\cdot)}{\tilde{p}_{\bs{\pi}}^{(1:T)}(\cdot)} \right| + \left| \log \frac{\tilde{q}_{\bs{\pi}'}^{(1:T)}(\cdot)}{\tilde{p}_{\bs{\pi}'}^{(1:T)}(\cdot)} \right| \\
        &= L_{\tilde{q}_{\bs{\pi}}^{(1:T)}}\left(\cdot\right) + L_{\tilde{q}_{\bs{\pi}'}^{(1:T)}}\left(\cdot\right),
\end{align}
where $\tilde{p}_{\bs{\pi}}^{(1:T)} = \tilde{p}_{\bs{\pi}'}^{(1:T)}$ is the uniform distribution over $K$ samples from $p_{\bs{\theta}}^{(1:T)}$. Thus, it is sufficient to bound the privacy loss $L_{\tilde{q}_{\bs{\pi}}^{(1:T)}}\left(\cdot\right)$ for the worst-case $\bs{\phi}$ with some $\varepsilon, \delta$. For the centralized guarantee, it would correspond to bounding the influence of adding or removing one client. If the local guarantee, or bounding the influence of substituting a client, is necessary, it will be given by $2\varepsilon$ (and correspondingly, $2\delta$). This is consistent with prior work where bounds on substitution are also double the bounds on addition/removal.

Consider the second term of Eq.~\ref{eq:split_probability_by_weights} for $T$ rounds of training:
\begin{align}
    \Pr&\left[ L_{\tilde{q}_{\bs{\pi}}^{(1:T)}} > \varepsilon \right] \\
        &= \int p_{\bs{\theta}}^{(1:T)} \left(\bs{\Delta}_{1:K}^{(1)} \ldots \bs{\Delta}_{1:K}^{(T)} \right) && \sum_{k_1=1}^{K} \tilde{q}_{\bs{\pi}}^{(1)}\left(\bs{\Delta}_{k_1}^{(1)}\right) ~\ldots \nonumber \\
        & &&\sum_{k_T=1}^{K} \tilde{q}_{\bs{\pi}}^{(T)} \left( \left. \bs{\Delta}_{k_T}^{(T)} \right|  \bs{\Delta}_{1:K}^{(1:T-1)} \right) \mathbbm{1}_{\left\{ L_{\tilde{q}_{\bs{\pi}}} > \varepsilon \right\}} d{\bs{\Delta}_{1:K}^{(1:T)}},
\end{align}
where
\begin{align}
    L_{\tilde{q}_{\bs{\pi}}}^{(1:T)} = \left| \log{\frac{\tilde{q}_{\bs{\pi}}\left(\bs{\Delta}_{1:K}^{(1:T)}\right)}{\tilde{p}_{\bs{\pi}}\left(\bs{\Delta}_{1:K}^{(1:T)}\right)}} \right|
\end{align}
is the total privacy loss across all samples from all iterations. Note also that by $\tilde{p}_{\bs{\pi}}(\cdot)$ we denote an importance sampling distribution formed when the proposal and the target are the same, which is essentially uniform over the $K$ samples.

Since $\mathbbm{1}_{\{\cdot\}} \leq 1$, we can employ the bound~\eqref{eq:sup_ex_bound} directly. However, due to the iterative importance sampling over rounds and possible dependencies between rounds, we have to use the law of total expectation and apply the bound recursively. Namely, denoting $\zeta^{(T)}(\bs{\Delta}_{1:K}^{(1:T)}) = \mathbbm{1}_{\left\{ L_{\tilde{q}_{\bs{\pi}}} > \varepsilon \right\}}$,
\begin{align}
\label{eq:recursion_for_delta_bound}
    &\Pr[L_{\tilde{q}_{\bs{\pi}}^{(1:T)}} > \varepsilon] \nonumber \\
        &= \int p_{\bs{\theta}}^{(1)} \left(\bs{\Delta}_{1:K}^{(1)}\right) \sum_{k_1=1}^{K} \tilde{q}_{\bs{\pi}}^{(1)}\left(\bs{\Delta}_{k_1}^{(1)}\right)~\ldots \\ 
        &\quad \int p_{\bs{\theta}}^{(T)} \left(\left.\bs{\Delta}_{1:K}^{(T)}\right|\bs{\Delta}_{1:K}^{(1:T-1)}\right) \sum_{k_T=1}^{K} \tilde{q}_{\bs{\pi}}^{(T)} \left( \! \left. \bs{\Delta}_{k_T}^{(T)} \right| \bs{\Delta}_{1:K}^{(1:T-1)} \right) \zeta^{(T)}\left(\bs{\Delta}_{1:K}^{(1:T)}\right) d{\bs{\Delta}_{1:K}^{(1:T)}}, \nonumber \\
        &\leq \int p_{\bs{\theta}}^{(1)} \left(\bs{\Delta}_{1:K}^{(1)}\right) \sum_{k_1=1}^{K} \tilde{q}_{\bs{\pi}}^{(1)}\left(\bs{\Delta}_{k_1}^{(1)}\right)~\ldots \\ 
        &\quad \int p_{\bs{\theta}}^{(T)} \left(\left.\bs{\Delta}_{1:K}^{(T)}\right|\bs{\Delta}_{1:K}^{(1:T-1)}\right) \!\left[ \int q_{\bs{\phi}}^{(T)} \left( \! \left. \bs{\Delta}^{(T)} \right| \bs{\Delta}_{1:K}^{(1:T-1)} \right) \zeta^{(T)}\left(\bs{\Delta}_{1:K}^{(1:T)}\right) d{\Delta^{(T)}} \!+\! \frac{12}{K}\rho_T \right]\! d{\bs{\Delta}_{1:K}^{(1:T)}} \nonumber \\
        &\leq \int p_{\bs{\theta}}^{(1)} \left(\bs{\Delta}_{1:K}^{(1)}\right) \sum_{k_1=1}^{K} \tilde{q}_{\bs{\pi}}^{(1)}\left(\bs{\Delta}_{k_1}^{(1)}\right)~\ldots \label{eq:zeta_t_minus_1} \\ 
        &\quad \underbrace{\int p_{\bs{\theta}}^{(T)} \left(\left.\bs{\Delta}_{1:K}^{(T)}\right|\bs{\Delta}_{1:K}^{(1:T-1)}\right) \int q_{\bs{\phi}}^{(T)} \left( \! \left. \bs{\Delta}^{(T)} \right| \bs{\Delta}_{1:K}^{(1:T-1)} \right) \zeta^{(T)}\left(\bs{\Delta}_{1:K}^{(1:T)}\right) d{\Delta^{(T)}} d{\bs{\Delta}_{1:K}^{(T)}}}_{\zeta^{(T-1)}\left(\bs{\Delta}_{1:K}^{(1:T-1)}\right)} d{\bs{\Delta}_{1:K}^{(1:T-1)}} \nonumber \\
        &~\quad + \frac{12}{K}\rho_T, \label{eq:rho_pull_out}
\end{align}
where we can pull \eqref{eq:rho_pull_out} out due to its independence from the rest of the expectation.

As noted in line~\eqref{eq:zeta_t_minus_1}, one can then treat the inner expectations (over the distributions from round $T$) as a new function $\zeta^{(T-1)}(\bs{\Delta}_{1:K}^{(1:T-1)})$, which is still bounded by $1$ because it's an expectation of an indicator function. Repeating the procedure, we get
\begin{align}
\label{eq:delta_bound}
    \Pr&[L_{\tilde{q}_{\bs{\pi}}^{(1:T)}} > \varepsilon] \nonumber \\
        &\leq \mathbb{E}_{p_{\bs{\theta}}^{(1:T)}} \left[ \mathbb{E}_{q_{\bs{\phi}}^{(1:T)}} \left[  \mathbbm{1}_{\left\{ \left| \log{\frac{\tilde{q}_{\bs{\pi}}\left(\bs{\Delta}_{1:K}^{(1:T)}\right)}{\tilde{p}_{\bs{\pi}}\left(\bs{\Delta}_{1:K}^{(1:T)}\right)}} \right| > \varepsilon \right\}} \right] \right] + \frac{12}{2^b} \underbrace{\sum_{t=1}^T \rho_t}_{\rho^{(1:T)}},
\end{align}
with $\rho_t = e^{\mathcal{D}_2\left(q_{\bs{\phi}}^{(t)} || p_{\theta}^{(t)}\right)}$. It is worth noting one more time that $q_{\bs{\phi}}^{(t)}$ and $p_{\theta}^{(t)}$ are conditional distributions, depending on rounds $1...t-1$, and that we do not assume independence between rounds.

Applying Chernoff inequality to \eqref{eq:delta_bound}:
\begin{align}
    \Pr&[L_{\tilde{q}_{\bs{\pi}}^{(1:T)}} > \varepsilon] \nonumber \\
        &\leq e^{-\lambda\varepsilon} \mathbb{E}_{p_{\bs{\theta}}^{(1:T)}} \left[ \mathbb{E}_{q_{\bs{\phi}}^{(1:T)}} \left[ e^{\lambda \left| \log{\frac{\tilde{q}_{\bs{\pi}}\left(\bs{\Delta}_{1:K}^{(1:T)}\right)}{\tilde{p}_{\bs{\pi}}\left(\bs{\Delta}_{1:K}^{(1:T)}\right)}} \right| } \right] \right] + \frac{12}{2^b}\rho^{(1:T)} \\
        &\leq \max
        \begin{cases} 
        e^{-\lambda\varepsilon} \mathbb{E}_{p_{\bs{\theta}}^{(1:T)}} \left[ \mathbb{E}_{q_{\bs{\phi}}^{(1:T)}} \left[ \left(\frac{\tilde{q}_{\bs{\pi}}\left(\bs{\Delta}_{1:K}^{(1:T)}\right)}{\tilde{p}_{\bs{\pi}}\left(\bs{\Delta}_{1:K}^{(1:T)}\right)}\right)^\lambda \right] \right] + \frac{12}{2^b}\rho^{(1:T)} \\
        e^{-\lambda\varepsilon} \mathbb{E}_{p_{\bs{\theta}}^{(1:T)}} \left[ \mathbb{E}_{q_{\bs{\phi}}^{(1:T)}} \left[ \left(\frac{\tilde{p}_{\bs{\pi}}\left(\bs{\Delta}_{1:K}^{(1:T)}\right)}{\tilde{q}_{\bs{\pi}}\left(\bs{\Delta}_{1:K}^{(1:T)}\right)}\right)^\lambda \right] \right] + \frac{12}{2^b}\rho^{(1:T)}. \\
        \end{cases}
\end{align}

As we have done above in \eqref{eq:recursion_for_delta_bound}, we re-arrange expectations using the law of total expectation:
\begin{align}
    &= e^{-\lambda\varepsilon} \mathbb{E}_{p_{\bs{\theta}}^{(1)}} \left[ \mathbb{E}_{q_{\bs{\phi}}^{(1)}} \left[ ~\ldots~  \mathbb{E}_{p_{\bs{\theta}}^{(T)}} \left[ \mathbb{E}_{q_{\bs{\phi}}^{(T)}} \left[ \left( \frac{\tilde{q}_{\bs{\pi}}\left(\bs{\Delta}_{1:K}^{(1:T)}\right)}{\tilde{p}_{\bs{\pi}}\left(\bs{\Delta}_{1:K}^{(1:T)}\right)} \right)^\lambda \right] \right] \ldots \right] \right] + \frac{12}{2^b}\rho^{(1:T)}.
\end{align}

Analogously, let us apply the chain rule to the inner expression:
\begin{align}
    \left( \frac{\tilde{q}_{\bs{\pi}}\left(\bs{\Delta}_{k^\ast}^{(1:T)}\right)}{\tilde{p}_{\bs{\pi}}\left(\bs{\Delta}_{k^\ast}^{(1:T)}\right)} \right)^\lambda 
        = \underbrace{ \left( \frac{\tilde{q}_{\bs{\pi}}\left(\bs{\Delta}_{k^\ast}^{(1)}\right)}{\tilde{p}_{\bs{\pi}}\left(\bs{\Delta}_{k^\ast}^{(1)}\right)} \right)^\lambda }_{\ell_{\tilde{q}_{\bs{\pi}}^{(1)}}} 
        \cdot \ldots \cdot 
        \underbrace{ \left( \frac{\tilde{q}_{\bs{\pi}}\left( \left. \bs{\Delta}_{k^\ast}^{(T)} \right| \bs{\Delta}_{k^\ast}^{(1:T-1)} \right)}{\tilde{p}_{\bs{\pi}} \left( \left. \bs{\Delta}_{k^\ast}^{(T)} \right| \bs{\Delta}_{k^\ast}^{(1:T-1)}\right)} \right)^\lambda }_{\ell_{\tilde{q}_{\bs{\pi}}^{(T)}}}
\end{align}
The same applies to the other direction --- $\tilde{p}_{\bs{\pi}} / \tilde{q}_{\bs{\pi}}$.

Continuing on \eqref{eq:delta_bound}:
\begin{align}
    &= e^{-\lambda\varepsilon} \mathbb{E}_{p_{\bs{\theta}}^{(1)}} \left[ \mathbb{E}_{q_{\bs{\phi}}^{(1)}} \left[ ~\ldots~  \mathbb{E}_{p_{\bs{\theta}}^{(T)}} \left[ \mathbb{E}_{q_{\bs{\phi}}^{(T)}} \left[ \ell_{\tilde{q}_{\bs{\pi}}^{(1)}} \cdot \ldots \cdot \ell_{\tilde{q}_{\bs{\pi}}^{(T)}} \right] \right] \ldots \right] \right] + \frac{12}{2^b}\rho^{(1:T)} \\
    &= e^{-\lambda\varepsilon} \mathbb{E}_{p_{\bs{\theta}}^{(1)}} \left[ \mathbb{E}_{q_{\bs{\phi}}^{(1)}} \left[ \ell_{\tilde{q}_{\bs{\pi}}^{(1)}} ~\ldots~ \underbrace{\mathbb{E}_{p_{\bs{\theta}}^{(T)}} \left[ \mathbb{E}_{q_{\bs{\phi}}^{(T)}} \left[ \ell_{\tilde{q}_{\bs{\pi}}^{(T)}} \right] \right]}_{\qquad L_{\tilde{q}_{\bs{\pi}}}^{(T)} ~\leq~ \kappa_\lambda} \ldots \right] \right] + \frac{12}{2^b}\rho^{(1:T)}.
\end{align}

If we bound the quantity $L_{\tilde{q}_{\bs{\pi}}}^{(T)}$ by some constant $\kappa_\lambda$, independent of all the previous samples $\bs{\Delta}_{k^\ast}^{(1:T-1)}$, we can bring it in front of the rest of the expectation. Note that this quantity is not exactly the privacy loss of the mechanism in round $T$, and the slight abuse of notation is for simplicity purposes. 

By again performing this operation recursively,
\begin{align}
\label{eq:goal_for_proof}
    &\leq e^{-\lambda\varepsilon} \kappa_\lambda^T + \frac{12}{2^b}\rho^{(1:T)}.
\end{align}

Let us therefore consider any of such terms in isolation, and in both directions ($\tilde{q}_{\bs{\pi}} / \tilde{p}_{\bs{\pi}}$ and $\tilde{p}_{\bs{\pi}} / \tilde{q}_{\bs{\pi}}$) to bound the absolute value of the $\log$. To proceed, observe that we can switch from importance sampling to the original continuous distributions inside the expectation in the following way:
\begin{align}
\label{eq:categorical_to_continuous}
	\frac{\tilde{q}_{\bs{\pi}}(\bs{\Delta}_{k^\ast}^{(t)})}{\tilde{p}_{\bs{\pi}}(\bs{\Delta}_{k^\ast}^{(t)})}
		&= \frac{q_{\bs{\phi}}(\bs{\Delta}_{k^\ast}^{(t)}) / p_{\bs{\theta}}(\bs{\Delta}_{k^\ast}^{(t)})}{p_{\bs{\theta}}(\bs{\Delta}_{k^\ast}^{(t)}) / p_{\bs{\theta}}(\bs{\Delta}_{k^\ast}^{(t)})} \frac{\sum_k{p_{\bs{\theta}}(\bs{\Delta}_{k}^{(t)}) / p_{\bs{\theta}}(\bs{\Delta}_{k}^{(t)})}}{\sum_k{q_{\bs{\phi}}(\bs{\Delta}_{k}^{(t)}) / p_{\bs{\theta}}(\bs{\Delta}_{k}^{(t)})}} \nonumber \\
		&= \frac{q_{\bs{\phi}}(\bs{\Delta}_{k^\ast}^{(t)})}{p_{\bs{\theta}}(\bs{\Delta}_{k^\ast}^{(t)})} \frac{1}{\sum_k{q_{\bs{\phi}}(\bs{\Delta}_{k}^{(t)}) / p_{\bs{\theta}}(\bs{\Delta}_{k}^{(t)})}}\sum_k{\frac{p_{\bs{\theta}}(\bs{\Delta}_{k}^{(t)})}{q_{\bs{\phi}}(\bs{\Delta}_{k}^{(t)})} \frac{q_{\bs{\phi}}(\bs{\Delta}_{k}^{(t)})}{p_{\bs{\theta}}(\bs{\Delta}_{k}^{(t)})}} \nonumber \\
		&= \frac{q_{\bs{\phi}}(\bs{\Delta}_{k^\ast}^{(t)})}{p_{\bs{\theta}}(\bs{\Delta}_{k^\ast}^{(t)})} \mathbb{E}_{\bs{\Delta}_{k'}^{(t)} \sim \tilde{q}_{\bs{\pi}}}\left[\frac{p_{\bs{\theta}}(\bs{\Delta}_{k'}^{(t)})}{q_{\bs{\phi}}(\bs{\Delta}_{k'}^{(t)})}\right].
\end{align}

Analogously, for the other direction:
\begin{align}
\label{eq:categorical_to_continuous}
	\frac{\tilde{p}_{\bs{\pi}}(\bs{\Delta}_{k^\ast}^{(t)})}{\tilde{q}_{\bs{\pi}}(\bs{\Delta}_{k^\ast}^{(t)})}
		&= \frac{p_{\bs{\theta}}(\bs{\Delta}_{k^\ast}^{(t)}) / p_{\bs{\theta}}(\bs{\Delta}_{k^\ast}^{(t)})}{q_{\bs{\phi}}(\bs{\Delta}_{k^\ast}^{(t)}) / p_{\bs{\theta}}(\bs{\Delta}_{k^\ast}^{(t)})} \frac{\sum_k{q_{\bs{\phi}}(\bs{\Delta}_{k}^{(t)}) / p_{\bs{\theta}}(\bs{\Delta}_{k}^{(t)})}}{\sum_k{p_{\bs{\theta}}(\bs{\Delta}_{k}^{(t)}) / p_{\bs{\theta}}(\bs{\Delta}_{k}^{(t)})}} \nonumber \\
		&= \frac{p_{\bs{\theta}}(\bs{\Delta}_{k^\ast}^{(t)})}{q_{\bs{\phi}}(\bs{\Delta}_{k^\ast}^{(t)})} \frac{1}{\sum_k{p_{\bs{\theta}}(\bs{\Delta}_{k}^{(t)}) / p_{\bs{\theta}}(\bs{\Delta}_{k}^{(t)})}}\sum_k{\frac{q_{\bs{\phi}}(\bs{\Delta}_{k}^{(t)})}{p_{\bs{\theta}}(\bs{\Delta}_{k}^{(t)})} \frac{p_{\bs{\theta}}(\bs{\Delta}_{k}^{(t)})}{p_{\bs{\theta}}(\bs{\Delta}_{k}^{(t)})}} \nonumber \\
		&= \frac{p_{\bs{\theta}}(\bs{\Delta}_{k^\ast}^{(t)})}{q_{\bs{\phi}}(\bs{\Delta}_{k^\ast}^{(t)})} \mathbb{E}_{\bs{\Delta}_{k'}^{(t)} \sim \tilde{p}_{\bs{\pi}}}\left[\frac{q_{\bs{\phi}}(\bs{\Delta}_{k'}^{(t)})}{p_{\bs{\theta}}(\bs{\Delta}_{k'}^{(t)})}\right].
\end{align}

Hence, keeping in mind that expectations and distributions are conditioned on the previous $t-1$ rounds,
\begin{align}
    L_{\tilde{q}_{\bs{\pi}} | \tilde{p}_{\bs{\pi}}}^{(t)} &= \mathbb{E}_{p_{\bs{\theta}}^{(t)}} \left[ \mathbb{E}_{q_{\bs{\phi}}^{(t)}} \left[ e^{\lambda\log{\frac{q_{\bs{\phi}}\left(\bs{\Delta}^{(t)}\right)}{p_{\bs{\theta}}\left(\bs{\Delta}^{(t)}\right)}} \mathbb{E}_{\tilde{q}_{\bs{\pi}}^{(t)}} \left[ \frac{p_{\bs{\theta}}\left(\bs{\Delta}_{k'}^{(t)}\right)}{q_{\bs{\phi}}\left(\bs{\Delta}_{k'}^{(t)}\right)} \right]} \right] \right] \\
        &= \mathbb{E}_{q_{\bs{\phi}}^{(t)}} \left[ e^{\lambda\log{\frac{q_{\bs{\phi}}\left(\bs{\Delta}^{(t)}\right)}{p_{\bs{\theta}}\left(\bs{\Delta}^{(t)}\right)}}} \right] \mathbb{E}_{p_{\bs{\theta}}^{(t)}} \left[ e^{\lambda\log{\mathbb{E}_{\tilde{q}_{\bs{\pi}}^{(t)}} \left[ \frac{p_{\bs{\theta}}\left(\bs{\Delta}_{k'}^{(t)}\right)}{q_{\bs{\phi}}\left(\bs{\Delta}_{k'}^{(t)}\right)} \right]}} \right] \\
        &= \mathbb{E}_{q_{\bs{\phi}}^{(t)}} \left[ \left(\frac{q_{\bs{\phi}}\left(\bs{\Delta}^{(t)}\right)}{p_{\bs{\theta}}\left(\bs{\Delta}^{(t)}\right)}\right)^\lambda \right] \mathbb{E}_{p_{\bs{\theta}}^{(t)}} \left[ \mathbb{E}_{\tilde{q}_{\bs{\pi}}^{(t)}} \left[ \frac{p_{\bs{\theta}}\left(\bs{\Delta}_{k'}^{(t)}\right)}{q_{\bs{\phi}}\left(\bs{\Delta}_{k'}^{(t)}\right)} \right]^\lambda \right] \label{eq:privacy_loss_q_to_p}.
\end{align}

And,
\begin{align}
    \label{eq:privacy_loss_p_to_q}
    L_{\tilde{p}_{\bs{\pi}} | \tilde{q}_{\bs{\pi}}}^{(t)} 
        &= \mathbb{E}_{q_{\bs{\phi}}^{(t)}} \left[ \left(\frac{p_{\bs{\theta}}\left(\bs{\Delta}^{(t)}\right)}{q_{\bs{\phi}}\left(\bs{\Delta}^{(t)}\right)}\right)^\lambda \right] \mathbb{E}_{p_{\bs{\theta}}^{(t)}} \left[ \mathbb{E}_{\tilde{p}_{\bs{\pi}}^{(t)}} \left[ \frac{q_{\bs{\phi}}\left(\bs{\Delta}_{k'}^{(t)}\right)}{p_{\bs{\theta}}\left(\bs{\Delta}_{k'}^{(t)}\right)} \right]^\lambda \right].
\end{align}

In both \eqref{eq:privacy_loss_q_to_p} and \eqref{eq:privacy_loss_p_to_q}, the first expectations $\mathbb{E}_{q_{\bs{\phi}}^{(t)}}[\cdot]$ are equivalent to the ones in DP-SGD and are basically moment-generating functions of the privacy loss random variable between two sequences of Gaussian distributions (or mixtures when subsampling is used) over $T$ rounds. 

Let us consider the second expectation, which requires further manipulation to avoid using the privacy sensitive importance weights. We cannot employ the bound by \citet{agapiou2017importance} because the function inside is not bounded. However, we can utilize the special form of this function:
\begin{align}
    &\mathbb{E}_{p_{\bs{\theta}}^{(t)}} \left[ \mathbb{E}_{\tilde{q}_{\bs{\pi}}^{(t)}} \left[ \frac{p_{\bs{\theta}}\left(\bs{\Delta}_{k'}^{(t)}\right)}{q_{\bs{\phi}}\left(\bs{\Delta}_{k'}^{(t)}\right)} \right]^\lambda \right] \\
        &= \mathop{\mathbb{E}}\limits_{\bs{\Delta}_{1:K}^{(t)} \sim p_{\bs{\theta}}^{(t)}} \left[ \frac{\sum_l{q_{\bs{\phi}}(\bs{\Delta}_{l}^{(t)}) / p_{\bs{\theta}}(\bs{\Delta}_{l}^{(t)})}}{K} \frac{K}{\sum_l{q_{\bs{\phi}}(\bs{\Delta}_{l}^{(t)}) / p_{\bs{\theta}}(\bs{\Delta}_{l}^{(t)})}} \mathbb{E}_{\tilde{q}_{\bs{\pi}}^{(t)}} \left[ \frac{p_{\bs{\theta}}\left(\bs{\Delta}_{k'}^{(t)}\right)}{q_{\bs{\phi}}\left(\bs{\Delta}_{k'}^{(t)}\right)} \right]^\lambda \right] \\
        &= \mathop{\mathbb{E}}\limits_{\bs{\Delta}_{1:K}^{(t)} \sim p_{\bs{\theta}}^{(t)}} \left[ \frac{\sum_l{q_{\bs{\phi}}(\bs{\Delta}_{l}^{(t)}) / p_{\bs{\theta}}(\bs{\Delta}_{l}^{(t)})}}{K} \sum_k{\frac{\frac{q_{\bs{\phi}}(\bs{\Delta}_{k}^{(t)})}{p_{\bs{\theta}}(\bs{\Delta}_{k}^{(t)})} \frac{p_{\bs{\theta}}(\bs{\Delta}_{k}^{(t)})}{q_{\bs{\phi}}(\bs{\Delta}_{k}^{(t)})}}{\sum_l{q_{\bs{\phi}}(\bs{\Delta}_{l}^{(t)}) / p_{\bs{\theta}}(\bs{\Delta}_{l}^{(t)})}}} \mathbb{E}_{\tilde{q}_{\bs{\pi}}^{(t)}} \left[ \frac{p_{\bs{\theta}}\left(\bs{\Delta}_{k'}^{(t)}\right)}{q_{\bs{\phi}}\left(\bs{\Delta}_{k'}^{(t)}\right)} \right]^\lambda \right] \\
        &= \mathop{\mathbb{E}}\limits_{\bs{\Delta}_{1:K}^{(t)} \sim p_{\bs{\theta}}^{(t)}} \left[ \frac{\sum_l{q_{\bs{\phi}}(\bs{\Delta}_{l}^{(t)}) / p_{\bs{\theta}}(\bs{\Delta}_{l}^{(t)})}}{K} \mathbb{E}_{\tilde{q}_{\bs{\pi}}^{(t)}} \left[ \frac{p_{\bs{\theta}}\left(\bs{\Delta}_{k'}^{(t)}\right)}{q_{\bs{\phi}}\left(\bs{\Delta}_{k'}^{(t)}\right)} \right]^{\lambda+1} \right] \\
        &\leq \mathop{\mathbb{E}}\limits_{\bs{\Delta}_{1:K}^{(t)} \sim p_{\bs{\theta}}^{(t)}} \left[ \frac{\sum_l{q_{\bs{\phi}}(\bs{\Delta}_{l}^{(t)}) / p_{\bs{\theta}}(\bs{\Delta}_{l}^{(t)})}}{K} \mathbb{E}_{\tilde{q}_{\bs{\pi}}^{(t)}} \left[ \left(\frac{p_{\bs{\theta}}\left(\bs{\Delta}_{k'}^{(t)}\right)}{q_{\bs{\phi}}\left(\bs{\Delta}_{k'}^{(t)}\right)}\right)^{\lambda+1} \right] \right] \\
        &= \mathop{\mathbb{E}}\limits_{\bs{\Delta}_{1:K}^{(t)} \sim p_{\bs{\theta}}^{(t)}} \left[ \frac{\sum_l{q_{\bs{\phi}}(\bs{\Delta}_{l}^{(t)}) / p_{\bs{\theta}}(\bs{\Delta}_{l}^{(t)})}}{K} \sum_k \frac{q_{\bs{\phi}}(\bs{\Delta}_{k}^{(t)}) / p_{\bs{\theta}}(\bs{\Delta}_{k}^{(t)})}{\sum_l{q_{\bs{\phi}}(\bs{\Delta}_{l}^{(t)}) / p_{\bs{\theta}}(\bs{\Delta}_{l}^{(t)})}}\left(\frac{p_{\bs{\theta}}(\bs{\Delta}_{k}^{(t)})}{q_{\bs{\phi}}(\bs{\Delta}_{k}^{(t)})}\right)^{\lambda+1}  \right] \\
        &= \mathop{\mathbb{E}}\limits_{\bs{\Delta}_{1:K}^{(t)} \sim p_{\bs{\theta}}^{(t)}} \left[ \frac{1}{K} \sum_k \frac{q_{\bs{\phi}}(\bs{\Delta}_{k}^{(t)})}{p_{\bs{\theta}}(\bs{\Delta}_{k}^{(t)})}\left(\frac{p_{\bs{\theta}}(\bs{\Delta}_{k}^{(t)})}{q_{\bs{\phi}}(\bs{\Delta}_{k}^{(t)})}\right)^{\lambda+1}  \right] \\
        &= \frac{1}{K} \sum_k \mathop{\mathbb{E}}\limits_{\bs{\Delta}_{1:K}^{(t)} \sim p_{\bs{\theta}}^{(t)}} \left[ \frac{q_{\bs{\phi}}(\bs{\Delta}_{k}^{(t)})}{p_{\bs{\theta}}(\bs{\Delta}_{k}^{(t)})} \left(\frac{p_{\bs{\theta}}(\bs{\Delta}_{k}^{(t)})}{q_{\bs{\phi}}(\bs{\Delta}_{k}^{(t)})}\right)^{\lambda+1}  \right] \\
        &= \frac{1}{K} \sum_k \mathop{\mathbb{E}}\limits_{\bs{\Delta}_k^{(t)} \sim q_\phi} \left[\left(\frac{p_{\bs{\theta}}(\bs{\Delta}_{k}^{(t)})}{q_{\bs{\phi}}(\bs{\Delta}_{k}^{(t)})}\right)^{\lambda+1}  \right] \\
        &= \mathop{\mathbb{E}}\limits_{\bs{\Delta}^{(t)} \sim q_\phi} \left[\left(\frac{p_{\bs{\theta}}(\bs{\Delta}^{(t)})}{q_{\bs{\phi}}(\bs{\Delta}^{(t)})}\right)^{\lambda+1}  \right] \\
        &= e^{\lambda \mathcal{D}_{\lambda+1}(p_{\bs{\theta}} || q_{\bs{\phi}})}
\end{align}

Similarly, for $L_{\tilde{p}_{\bs{\pi}} | \tilde{q}_{\bs{\pi}}}^{(t)}$:
\begin{align}
    \mathbb{E}_{p_{\bs{\theta}}^{(t)}} \left[ \mathbb{E}_{\tilde{p}_{\bs{\pi}}^{(t)}} \left[ \frac{q_{\bs{\phi}}\left(\bs{\Delta}_{k'}^{(t)}\right)}{p_{\bs{\theta}}\left(\bs{\Delta}_{k'}^{(t)}\right)} \right]^\lambda \right]
        &\leq \mathop{\mathbb{E}}\limits_{\bs{\Delta}_{1:K}^{(t)} \sim p_{\bs{\theta}}^{(t)}} \left[ \mathbb{E}_{\tilde{p}_{\bs{\pi}}^{(t)}} \left[ \left( \frac{q_{\bs{\phi}}\left(\bs{\Delta}_{k'}^{(t)}\right)}{p_{\bs{\theta}}\left(\bs{\Delta}_{k'}^{(t)}\right)} \right)^{\lambda} \right] \right] \\
        &= \mathop{\mathbb{E}}\limits_{\bs{\Delta}_{1:K}^{(t)} \sim p_{\bs{\theta}}^{(t)}} \left[ \frac{1}{K} \sum_k \left( \frac{q_{\bs{\phi}}\left(\bs{\Delta}_{k'}^{(t)}\right)}{p_{\bs{\theta}}\left(\bs{\Delta}_{k'}^{(t)}\right)} \right)^{\lambda}  \right] \\
        &= \mathop{\mathbb{E}}\limits_{\bs{\Delta}^{(t)} \sim p_{\bs{\theta}}} \left[\left(\frac{q_{\bs{\phi}}(\bs{\Delta}^{(t)})}{p_{\bs{\theta}}(\bs{\Delta}^{(t)})}\right)^{\lambda}  \right] \\
        &= e^{(\lambda - 1) \mathcal{D}_{\lambda}(q_{\bs{\phi}} || p_{\bs{\theta}})}
\end{align}

Putting everything together, we have
\begin{align}
    L_{\tilde{q}_{\bs{\pi}}}^{(t)} &\leq \max
    \begin{cases}
    e^{\lambda \mathcal{D}_{\lambda+1}(q_{\bs{\phi}} || p_{\bs{\theta}}) + \lambda \mathcal{D}_{\lambda+1}(p_{\bs{\theta}} || q_{\bs{\phi}})} \\
    e^{(\lambda - 1) \mathcal{D}_{\lambda}(p_{\bs{\phi}} || q_{\bs{\theta}}) + (\lambda - 1) \mathcal{D}_{\lambda}(q_{\bs{\theta}} || p_{\bs{\phi}})},
    \end{cases} \\
    &= e^{\lambda \mathcal{D}_{\lambda+1}(q_{\bs{\phi}} || p_{\bs{\theta}}) + \lambda \mathcal{D}_{\lambda+1}(p_{\bs{\theta}} || q_{\bs{\phi}})} \quad \leq \quad \kappa_\lambda,
\end{align}
where $\kappa_\lambda$ does not depend on any of the previous rounds, since we can bound R\'enyi divergences (e.g., by clipping the model updates), and is defined as
\begin{align}
    \kappa_\lambda \equiv \max_{\phi,\theta} e^{ \lambda \mathcal{D}_{\lambda+1}(q_{\bs{\phi}} || p_{\bs{\theta}}) + \lambda \mathcal{D}_{\lambda+1}(p_{\bs{\theta}} || q_{\bs{\phi}})}.
\end{align}
It then satisfies the conditions to obtain Eq.~\ref{eq:goal_for_proof}, and consequently, proves the theorem with
\begin{align}
    c_\lambda^{(t)} = \max 
    \begin{cases} 
        \max_{\bs{\theta}, \bs{\phi}} \mathcal{D}_\lambda (q_{\bs{\phi}}^{(t)} || p_{\bs{\theta}}^{(t)}) \\
        \max_{\bs{\theta}, \bs{\phi}} \mathcal{D}_\lambda (p_{\bs{\theta}}^{(t)} || q_{\bs{\phi}}^{(t)}) \end{cases}
\end{align}
\end{proof}

\subsection{Compression by parts}
\label{sec:compression_by_parts}
Let us consider the setting where parts of the model (such as tensors, or even individual parameters) are compressed independently. Assume all model parameters $\bs{\Delta}$ are split into $M$ non-intersecting groups $\bs{\Delta}^{[1]}, \ldots, \bs{\Delta}^{[M]}$ and are encoded using $b_1, \ldots, b_M$ bits correspondingly. We use square brackets to distinguish these indices from the round indices. The following holds.
\begin{manualtheorem}{3}
Expectation of an arbitrary function $\zeta(\bs{\Delta}^{[1]}, \ldots, \bs{\Delta}^{[M]})$ over the importance sampling distributions $q_{\bs{\pi}}^{[1]}, \ldots, q_{\bs{\pi}}^{[M]}$, built according to the outlined procedure for each parameter group independently, is equivalent to the expectation over the joint importance sampling distribution $q_{\bs{\pi}}$ built on $2^{\sum_{i=1}^{M}b_i}$ samples, i.e.,
\begin{align*}
    \mathbb{E}_{\bs{\Delta}^{[1]} \sim q_{\bs{\pi}}^{[1]}} \left[ \ldots \mathbb{E}_{\bs{\Delta}^{[M]} \sim q_{\bs{\pi}}^{[M]}} \left[ \zeta(\bs{\Delta}^{[1]}, \ldots, \bs{\Delta}^{[M]}) \right] \ldots \right] = \mathbb{E}_{\bs{\Delta}^{[1:M]} \sim q_{\bs{\pi}}} \left[ \zeta(\bs{\Delta}^{[1:M]}) \right].
\end{align*}
\end{manualtheorem}
\begin{proof}
To show that the above is true it is sufficient to write down these expectations:
\begin{align}
    &\mathbb{E}_{\bs{\Delta}^{[1]} \sim q_{\bs{\pi}}^{[1]}} \left[ \ldots \mathbb{E}_{\bs{\Delta}^{[M]} \sim q_{\bs{\pi}}^{[M]}} \left[ \zeta(\bs{\Delta}^{[1]}, \ldots, \bs{\Delta}^{[M]}) \right] \ldots \right] \\
        &= \sum_{k_1=1}^{2^{b_1}} \frac{q_{\bs{\phi}}(\bs{\Delta}_{k_1}^{[1]}) / p_{\bs{\theta}}(\bs{\Delta}_{k_1}^{[1]})}{\sum_{l_1} q_{\bs{\phi}}(\bs{\Delta}_{l_1}^{[1]}) / p_{\bs{\theta}}(\bs{\Delta}_{l_1}^{[1]})} \sum_{k_2=1}^{2^{b_2}} \ldots \sum_{k_M=1}^{2^{b_M}} \frac{q_{\bs{\phi}}(\bs{\Delta}_{k_M}^{[M]}) / p_{\bs{\theta}}(\bs{\Delta}_{k_M}^{[M]})}{\sum_{l_M} q_{\bs{\phi}}(\bs{\Delta}_{l_M}^{[M]}) / p_{\bs{\theta}}(\bs{\Delta}_{l_M}^{[M]})} \zeta(\bs{\Delta}_{k_1}^{[1]}, \ldots, \bs{\Delta}_{k_M}^{[M]}) \\
        &= \sum_{k_1=1}^{2^{b_1}} \sum_{k_2=1}^{2^{b_2}} \ldots \sum_{k_M=1}^{2^{b_M}} \frac{q_{\bs{\phi}}(\bs{\Delta}_{k_1}^{[1]}) / p_{\bs{\theta}}(\bs{\Delta}_{k_1}^{[1]})}{\sum_{l_1} q_{\bs{\phi}}(\bs{\Delta}_{l_1}^{[1]}) / p_{\bs{\theta}}(\bs{\Delta}_{l_1}^{[1]})} \ldots \frac{q_{\bs{\phi}}(\bs{\Delta}_{k_M}^{[M]}) / p_{\bs{\theta}}(\bs{\Delta}_{k_M}^{[M]})}{\sum_{l_M} q_{\bs{\phi}}(\bs{\Delta}_{l_M}^{[M]}) / p_{\bs{\theta}}(\bs{\Delta}_{l_M}^{[M]})} \zeta(\bs{\Delta}_{k_1}^{[1]}, \ldots, \bs{\Delta}_{k_M}^{[M]}) \\
        &= \sum_{k_1=1}^{2^{b_1}} \sum_{k_2=1}^{2^{b_2}} \ldots \sum_{k_M=1}^{2^{b_M}} \frac{\frac{q_{\bs{\phi}}(\bs{\Delta}_{k_1}^{[1]}) \ldots q_{\bs{\phi}}(\bs{\Delta}_{k_M}^{[M]})}{p_{\bs{\theta}}(\bs{\Delta}_{k_1}^{[1]}) \ldots p_{\bs{\theta}}(\bs{\Delta}_{k_M}^{[M]})}}{\sum_{l_1} \ldots \sum_{l_M} \frac{q_{\bs{\phi}}(\bs{\Delta}_{l_1}^{[1]}) \ldots q_{\bs{\phi}}(\bs{\Delta}_{l_M}^{[M]})}{p_{\bs{\theta}}(\bs{\Delta}_{l_1}^{[1]}) \ldots p_{\bs{\theta}}(\bs{\Delta}_{l_M}^{[M]})}} \zeta(\bs{\Delta}_{k_1}^{[1]}, \ldots, \bs{\Delta}_{k_M}^{[M]}) \\
        &= \sum_{k_*=1}^{2^{b_1 + \ldots + b_M}} \frac{\frac{q_{\bs{\phi}}(\bs{\Delta}_{k_*}^{[1:M]})}{p_{\bs{\theta}}(\bs{\Delta}_{k_*}^{[1:M]})}}{\sum_{l_*=1}^{2^{b_1 + \ldots + b_M}} \frac{q_{\bs{\phi}}(\bs{\Delta}_{l_*}^{[1:M]})}{p_{\bs{\theta}}(\bs{\Delta}_{l_*}^{[1:M]})}} \zeta(\bs{\Delta}_{k_*}^{[1:M]}) \label{eq:independence_of_parts} \\
        &= \mathbb{E}_{\bs{\Delta}^{[1:M]} \sim q_{\bs{\pi}}} \left[ \zeta(\bs{\Delta}^{[1:M]}) \right],
\end{align}
where $k_* = (k_1, k_2, \ldots, k_M)$ and $l_* = (l_1, l_2, \ldots, l_M)$ are multi-indexes. The line \eqref{eq:independence_of_parts} is because distributions $q_{\bs{\phi}}^{[1]}, \ldots, q_{\bs{\phi}}^{[M]}$ and $p_{\bs{\theta}}^{[1]}, \ldots, p_{\bs{\theta}}^{[M]}$ for different parts of the model are parameterized by the model updates $/$ learnt parameters, and hence, are independent given these updates $/$ parameters.
\end{proof}

\section{Additional Discussion and Algorithms}
\label{sec:app_discussion_and_algos}

\subsection{Additional discussion}
\label{sec:app_additional_discussion}

\textbf{Computational overhead}~ \ourmethod{} does introduce additional computational requirements compared to standard \fedavg{} and compared to \dpfedavg{}. Every line in Algorithm~\ref{alg:fedpc_encode} involves some additional computation due to the \rec{} compression. More specifically, \rec{} involves locally sampling $K$ i.i.d. normal samples of dimensionality of the update. After clipping updates, \rec{} computes the importance samples $\alpha_k$, essentially calculating log-probabilities of the samples under two Gaussian distributions. Finally, sampling from the categorical distribution $\tilde{q}_{\bs{\pi}}$, which is relatively cheap since we perform per-tensor compression. It is difficult to benchmark the real-world impact of this additional computational overhead given heterogeneous hardware. A practical implementation would sample the standard-normal samples during training (or even during idle-time), as well as the Gumbel-samples for the categorical distribution. The log-probabilities of the prior can equally be pre-computed. The index-selection procedure can be parallelized if the hardware supports it. In our simulated environment on a RTX2080 GPU, which runs everything sequentially (training, then per-tensor Gaussian sampling, per-tensor $\alpha$ computation and index-selection), for the Cifar10 experiment with $K = 2^7$, a single-client’s epoch has a roughly 70\%:30\% ratio of training-to-compression, with some variance due to the different local data-set sizes.

\textbf{Increasing accuracy at the cost of communication}~ There is a noticeable accuracy gap between \ourmethod{} and \dpfedavg{}. A reasonable question to ask is whether it is possible to trade higher communication spending for better accuracy? Unfortunately, it is not as simple. The reason for observing the gap lies not in compression but rather in privacy accounting. Figure~\ref{fig:bitwidth_ablation} shows empirically that increasing bit-width has marginal returns, up until doing no compression at all. Any configuration in terms of higher bit-widths or more fine-grained vector quantization can be expected to perform between 7-bit per-tensor quantization and no-compression. Compression-only experiments in Appendix~\ref{sec:app_compression_only} also corroborate this point, as the non-private compressed model achieves performance much closer to FedAvg. Expressing the divergence bound of discrete distributions through continuous distributions leads to nearly double the amount of noise necessary for an equivalent guarantee compared to the normal continuous Gaussian mechanism. Thus, we would need to relax privacy to close the accuracy gap. This bound with overhead appears to be tight too, judging by some of our synthetic experiments measuring how close it comes to the divergence computed from actual discrete distributions.
However, there is still a possible way to reduce the accuracy difference by employing stronger privacy amplification (e.g. adding a secure shuffler), which will counter the overhead of the bound. This effect is seen in StackOverflow dataset, where subsampling amplification is stronger due to a large number of users and the accuracy gap between \dpfedavg{} and \ourmethod{} is drastically smaller. 

\subsection{Additional algorithms}
\label{sec:AppStoCcompression}

\subsubsection{Server-to-client message compression}

Algorithm \ref{alg:servertoclient_server} and Algorithm \ref{alg:servertoclient_client} formalize the process described in Section \ref{sec:compressingServerToClient}. 
\begin{minipage}[t]{0.53\textwidth}
\begin{algorithm}[H]
\centering
\caption{The server side algorithm for the compression of server-to-client communication. $R_s^{(t)}$ is the client-chosen shared random seed for round $t$. \textit{dec} describes Alg. \ref{alg:fedpc_decode}. $O^{(t)}$ describes the sever-side optimizer including its state (\emph{e.g.} momenta)}\label{alg:miracle_server_to_client_compression}
\label{alg:servertoclient_server}
\begin{algorithmic}
\State $H_s = \{(R',O^{(0)})\}\, \forall s \in S$ \Comment{History with model initial seed $R'$}
\For{$t \in \{ 0 \dots T \}$ }
    \State Sample set $S'$ of participating clients
    \State $\mathcal{M} \gets \{\}$ \Comment{Round memory}
    \For{$s \in S'$ in parallel}
        \State $msg_s \gets create\_message\_for\_client(s, H_s)$
        \State \textit{Send} $\text{msg}_s$ to client $s$
        \State \textit{Receive} $k_s^*, R_s^{(t)}$ \Comment{client-chosen index, seed}
        \State $\mathcal{M} \gets \mathcal{M} \cup \{(k_s^*, R_s^{(t)})\}$
    \EndFor
    \For{$s \in S$}
        \State $H_s \gets update\_history(s,S',H_s,\mathcal{M}))$
    \EndFor
    \State $\bs{\Delta}^{(t)}\gets\frac{1}{S'}\sum_{k^* \in \mathcal{M}} dec(R^{(t)},k^*)$
    \State $\bv{w}^{(t+1)} \gets \bv{w}^{(t)} - O^{(t)}(\bs{\Delta}^{(t)})$
    \EndFor
\\
\Procedure{create message for client}{$s, H_s$}
    \If{$size(H_s) > size(\bv{w}^{(t)}) + size(O^{(t)})$}
        \State $msg_s \gets (\bv{w}^{(t)},O^{(t)})$
    \Else
        \State $msg_s \gets H_s$
    \EndIf
    \State $H_s \gets \{\}$ \Comment{Reset client history}
    \State \Return $msg_s$
\EndProcedure
\\
\Procedure{update history}{$s, S', H_s,\mathcal{M}$}
    \If{$s \in S'$}
        \State $H_s \gets H_s \cup \{\mathcal{M}\backslash \{(k_s^*,R_s^{(t)})\})\} $
    \Else
        \State $H_s \gets H_s \cup \{\mathcal{M})\} $
    \EndIf
    \State \Return $H_s$
\EndProcedure    
\end{algorithmic}
\end{algorithm}
\end{minipage}
\hfill
\begin{minipage}[t]{0.43\textwidth}
\begin{algorithm}[H]
\centering
\caption{The client side algorithm for the decompression of server-to-client communication. \textit{dec} describes Alg. \ref{alg:fedpc_decode}}\label{alg:miracle_server_to_client_decode}
\label{alg:servertoclient_client}
\begin{algorithmic}
\If {$msg_s = (\bv{w}^{(t)},O)$}
\State $\bv{w}^{(t)} \gets \bv{w}^{(t)}$
\State $O_s^{(t)} \gets O^{(t)}$
\Else {}
\State $H \gets msg_s$
\State $\bv{w}^{(t)} \gets \bv{w}_s^{(prev)}$ \Comment{Last-known server model}
\For {$\mathcal{M} \in H$}
    \If {$\mathcal{M} = (R',O^{0})$}
    \State $\bv{w}^{(t)} \gets initialize(R')$ 
    \State $O_s \gets O^{(0)}$
    \Else
    \State $\bs{\Delta} \!\gets\! \frac{1}{|\mathcal{M}|} \!\sum_{(k,R) \in \mathcal{M}}\! dec(R,k)$
    \State $\bv{w}^{(t)} \gets \bv{w}^{(t)} - O_s(\bs{\Delta})$ 
    \EndIf
\EndFor
\EndIf\\
\Return $\bv{w}^{(t)}$
\end{algorithmic}
\end{algorithm}
\end{minipage}

\subsubsection{Algorithm for accounting privacy in \ourmethod{}}

Algorithm \ref{alg:dprec_accountant} describes how we compute $\varepsilon$ for a target $\delta$ in \ourmethod{} in general. More specifically, in our experiments, $p_{\bs{\theta}}^{(t)} = \mathcal{N}(\bv{0}, \sigma^2\bv{I})$ and $q_{\bs{\phi}}^{(t)} =  \mathcal{N}(\bs{\phi}_q, \sigma^2\bv{I})$. For such distributions, the R\'enyi divergence between them evaluates to
\begin{align}
\mathcal{D}_\lambda \left(q_{\bs{\phi}}^{(t)} || p_{\bs{\theta}}^{(t)}\right) &= \frac{\lambda}{2\sigma^2} \lVert \bs{\phi}_q\rVert_2^2.
\end{align}
Thus, for a given $\lambda$ and $\sigma$, bounding this divergence corresponds to clipping the norm of $\bs{\phi}_q$, \emph{i.e.}, clipping $\lVert \bs{\phi}_q\rVert_2$ to $C$ corresponds to $\mathcal{D}_\lambda \left(q_{\bs{\phi}}^{(t)} || p_{\bs{\theta}}^{(t)}\right) \leq \frac{\lambda C^2}{2\sigma^2},~ \forall \bs{\phi}, \bs{\theta}$. In order to allow for privacy amplification with subsampling, we also need to bound the R\'enyi divergence between the prior $p_{\bs{\theta}}^{(t)}$ and the mixture $\frac{B}{N} q_{\bs{\phi}}^{(t)} + \frac{N-B}{N} p_{\bs{\theta}}^{(t)}$ where $N$ corresponds to the number of participants in the federation and $B/N$ to the subsampling rate~\citep{abadi2016deep,mironov2019renyi}. In the general case of privacy accounting for distributions $p_{\bs{\theta}}^{(t)}$ and $q_{\bs{\phi}}^{(t)}$ directly, we have to consider the maximum over the two possible directions:
\begin{align}
    \mathcal{D}_\lambda\left( \left. \frac{N-B}{N}p_{\bs{\theta}}^{(t)} + \frac{B}{N} q_{\bs{\phi}}^{(t)} \right\| p_{\bs{\theta}}^{(t)}\right), \qquad \mathcal{D}_\lambda\left( \left. p_{\bs{\theta}}^{(t)}\right\| \frac{N-B}{N}p_{\bs{\theta}}^{(t)} + \frac{B}{N} q_{\bs{\phi}}^{(t)} \right).
\end{align}
For certain choices of the distributions $p_{\bs{\theta}}^{(t)}$ and $q_{\bs{\phi}}^{(t)}$, this calculation can be simplified, as \citep{mironov2019renyi} show that
\begin{align}
    \mathcal{D}_\lambda\left( \left. \frac{N-B}{N}p_{\bs{\theta}}^{(t)} + \frac{B}{N} q_{\bs{\phi}}^{(t)} \right\| p_{\bs{\theta}}^{(t)}\right) \geq  \mathcal{D}_\lambda\left( \left. p_{\bs{\theta}}^{(t)}  \right\| \frac{N-B}{N}p_{\bs{\theta}}^{(t)} + \frac{B}{N} q_{\bs{\phi}}^{(t)}\right).
\end{align}
Our bound actually contains the sum of divergences in both directions, so we can either compute both quantities, or bound it by two times the first divergence. Furthermore, again following~\citep{abadi2016deep,mironov2019renyi}, the first divergence can be simplified for a general mixture with weights $\alpha$ and $(1-\alpha)$, our specific choice of $q_{\bs{\phi}}^{(t)}$ and $p_{\bs{\theta}}^{(t)}$, and for integer $\lambda$:
\begin{align}
    \mathcal{D}_\lambda\left( \left. (1-\alpha)p_{\bs{\theta}}^{(t)} + \alpha q_{\bs{\phi}}^{(t)} \right\| p_{\bs{\theta}}^{(t)}\right) 
        &= \frac{1}{\lambda-1}\log\left( \mathbb{E}_{p_{\bs{\theta}}^{(t)}}\left[\left(\frac{(1 - \alpha) p_{\bs{\theta}}^{(t)} + \alpha q_{\bs{\phi}}^{(t)}}{p_{\bs{\theta}}^{(t)}} \right)^\lambda \right] \right) \\
        &= \frac{1}{\lambda-1}\log\left(\mathbb{E}_{k\sim \mathcal{B}(\lambda,\alpha)}\left[ \mathbb{E}_{p_{\bs{\theta}}^{(t)}}\left[\left(\frac{q_{\bs{\phi}}^{(t)}}{p_{\bs{\theta}}^{(t)}} \right)^k\right] \right]\right) \\
        &\leq \frac{1}{\lambda-1}\log\left(\mathbb{E}_{k\sim \mathcal{B}(\lambda,B/N)}\left[ e^{\frac{k^2-k}{2\sigma^2}C^2} \right]\right).
\end{align}

This allows us to readily calculate all of the terms in Algorithm~\ref{alg:dprec_accountant}.

\begin{algorithm}[H]
\centering
\caption{Privacy accounting for \ourmethod{} with subsampling for privacy amplification. $\Lambda$ are the R\'enyi orders $\lambda > 1$ that will be considered and $b$ are the total number of bits used to represent the neural network update. Furthermore, $T$ are the total communication rounds for training, $B$ is the number of users considered on each round and $N$ is the total number of users in the federation. $\delta$ is the target probability of DP failing to provide a guarantee.}\label{alg:dprec_accountant}

\begin{algorithmic}
\State $\rho^{(0)} \gets 0$
\State $\hat{k}_\lambda^{(0)} \gets 0, \enskip \forall \lambda \in \Lambda$
\State $\hat{m}_\lambda^{(0)} \gets 0, \enskip  \forall \lambda \in \Lambda$
\For {$t \gets 1, \dots, T B$}
    \State $\rho^{(t)} \gets \rho^{(t-1)} + \max_{\bs{\theta}, \bs{\phi}} e^{\mathcal{D}_2 \left(q_{\bs{\phi}}^{(t)} || p_{\bs{\theta}}^{(t)}\right)}$
    \For {$\lambda \in \Lambda$}
        \State $\hat{k}_\lambda^{(t)} \gets \hat{k}_\lambda^{(t - 1)} + \max_{\bs{\theta}, \bs{\phi}} \mathcal{D}_{\lambda + 1}\left( \left. \frac{N-1}{N}p_{\bs{\theta}}^{(t)} + \frac{1}{N} q_{\bs{\phi}}^{(t)} \right\| p_{\bs{\theta}}^{(t)}\right)$
        \State $\hat{m}_\lambda^{(t)} \gets \hat{m}_\lambda^{(t - 1)} + \max_{\bs{\theta}, \bs{\phi}} \mathcal{D}_{\lambda + 1}\left( \left. p_{\bs{\theta}}^{(t)}  \right\| \frac{N-1}{N}p_{\bs{\theta}}^{(t)} + \frac{1}{N} q_{\bs{\phi}}^{(t)}\right)$
    \EndFor
\EndFor
\State $\displaystyle \varepsilon \gets \min_\lambda\left(\hat{k}_\lambda^{(TB)} + \hat{m}_\lambda^{(TB)} - \frac{1}{\lambda}\log\left(\delta - \frac{12}{2^b} \rho^{(TB)}\right)\right)$
\end{algorithmic}
\end{algorithm}

\subsubsection{Overall algorithm for federated training with \ourmethod{}}
\begin{algorithm}[H]
\centering
\caption{Overall federated training pipeline when using \ourmethod{}. $S$ corresponds to all clients in the federation and $s$ to a specific client. $C$ corresponds to the desired sensitivity.}\label{alg:dprec_fl_pipeline}
\begin{algorithmic}
\Procedure{Server training}{$T, \sigma$}
\State $H_s = \{(R',O^{(0)})\}\, \forall s \in S$ \Comment{History with model initial seed $R'$}
\For{$t \in \{ 0 \dots T \}$ }
    \State $S' \gets$ Sample set of participating clients with replacement
    \State $\mathcal{M} \gets \{\}$ \Comment{Round memory}
    \For{$s \in S'$ in parallel}
        \State $msg_s \gets create\_message\_for\_client(s, H_s)$
        \State $k_s^*, R_s^{(t)} \gets client\_training(msg_s)$
        \State $\mathcal{M} \gets \mathcal{M} \cup \{(k_s^*, R_s^{(t)})\}$
    \EndFor
    \For{$s \in S$}
        \State $H_s \gets update\_history(s,S',H_s,\mathcal{M}))$
    \EndFor
    \State $\bs{\Delta}^{(t)}\gets\frac{1}{S'}\sum_{k^* \in \mathcal{M}} dec(R^{(t)},k^*)$
    \State $\bv{w}^{(t+1)} \gets \bv{w}^{(t)} - O^{(t)}(\bs{\Delta}^{(t)})$
\EndFor
\State $\varepsilon \gets$ compute $\varepsilon$ guarantee for a given $\delta$. \Comment{Using Alg.~\ref{alg:dprec_accountant}}
\EndProcedure\\

\Procedure{Client training}{$msg_s$}
\State $\bv{w}^{(t)} \gets receive\_message(msg_s)$ \Comment{Refers to Alg.~\ref{alg:miracle_server_to_client_decode}}
\State $\bv{w}_s^{(t)} \gets \bv{w}^{(t)}$
\For{local epoch $e \in \{1, \dots, E\}$}
    \For{batch $b \in B$}
        \State $\bv{w}_s^{(t)} \gets \bv{w}_s^{(t)} - \nabla \ell(\bv{w}_s^{(t)}, b)$ \Comment{$\ell$ corresponds to the local loss}
    \EndFor
\EndFor
\State $\bs{\phi}^{(t)}_s \gets \bv{w}^{(t)}_s - \bv{w}^{(t)}$
\State $\hat{\bs{\phi}}^{(t)}_s = \bs{\phi}^{(t)}_s \min(1, C / \|\bs{\phi}^{(t)}_s\|_2)$
\State $R_s^{(t)} \gets$ Choose a random seed
\State $k_s^* \gets enc(\hat{\bs{\phi}}^{(t)}_s, R_s^{(t)})$ \Comment{Using Alg.~\ref{alg:miracle_encode}}
\State \Return $k_s^*, R_s^{(t)}$
\EndProcedure

\end{algorithmic}
\end{algorithm}

\section{Experimental details}
\label{sec:AppExpDetails}
The experiments in the main text were performed on two Nvidia RTX 2080Ti GPUs, as well on several Nvidia Tesla V100 GPU's available on an internal cluster over the span of two weeks.

\subsection{Datasets \& models}
\paragraph{MNIST}
For MNIST, we consider a LeNet-5~\citep{lecun1998gradient} model trained on a federated version of the original $50$k training images. We split the data across $100$ clients in a non-i.i.d. way where the label proportions on each client are determined by sampling a Dirichlet distribution with concentration $\alpha=1.0$~\citep{hsu2019measuring}. For evaluation, we assume the standard validation split of MNIST to be available at the server. We run the \dpfedavg{} experiments with three random seeds, whereas for \ourmethod{} we used ten random seeds,  as it was more noisy in this setting.
\nocite{hsu2019measuring}

\paragraph{FEMNIST}
FEMNIST is a federated version of the extended MNIST (EMNIST) dataset \citep{cohen2017emnist}. It consists of MNIST-like images of handwritten letters and digits belonging to one of $62$ classes. The federated nature of the dataset is naturally determined by the writer for a given datapoint. Additionally, the size of the individual clients' datasets differ significantly. In the literature, there are two versions of this dataset used for experimentation. Originally published by \citep{caldas2018leaf}, their published code\footnote{\url{https://github.com/TalwalkarLab/leaf}} provides a recipe to pre-process the dataset into the federated version. Unfortunately, however, the statistics reported in the paper do not align with the result of this recipe. We repeat the statistics as we use them for our experiments in Table \ref{tab:datasets_stats}. As mentioned in the main text, some works such as \ddgauss{} \citep{kairouz2021distributed} use the FEMNIST version provided by tensorflow federated\footnote{\url{https://www.tensorflow.org/federated/api_docs/python/tff/simulation/datasets/emnist}}. It consists of a smaller subset of $3400$ clients. For the model architecture, we consider the convolutional network described in~\citep{kairouz2021distributed}, albeit without dropout regularization. We run three seeds for \dpfedavg{} and \ourmethod{} with $\varepsilon=1,3,6$.

\paragraph{Shakespeare}
For the Shakespeare dataset we closely follow~\citep{caldas2018leaf}. Each client corresponds to a unique character across the collection of Shakespeare's plays with a minimum number of spoken lines. The non-i.i.d. characteristics of this dataset are due to the different ''speaking" styles of the resulting $660$ roles. We use the same $2$-layer LSTM model as in \citep{caldas2018leaf} for this next-character-prediction task (considering a library of $80$ characters). Each client predicts the next character following the LSTM encoding of the previous $80$ characters. For Table \ref{tab:datasets_stats} we consider each pair of $80$ character plus next character as a single sample. The statistics we report differ markedly from the statistics in \citep{caldas2018leaf}, as reported on a corresponding issue raised in their code base\footnote{\url{https://github.com/TalwalkarLab/leaf/issues/13}}. We run three seeds for both \dpfedavg{} and \ourmethod{}.

\paragraph{StackOverflow}
The StackOverflow dataset \citep{stackoverflow2019} consists of a collection of questions and answers posted on the StackOverflow website during a certain time window. Each user of that website who posted there in that time-frame is considered a client with their aggregated posts as the client's dataset. Here, we consider the task of tag prediction described in \citep{reddi2020adaptive}. Associated with each posting (irrespective of whether it is a question or an answer) is associated at least one of $500$ tags. Each client is therefore performing one-vs-all classification corresponding to $500$ binary classifications. We pre-process each post by creating a bag-of-words representation of the $10,000$ most frequent words, normalized to $1$. As model, we consider logistic regression. Learning curves in the main paper were created by selecting the first $10,000$ data-points when iterating over a shuffled list of hold out clients. As noted in the main text, each run selected a different seed for shuffling, resulting in non directly comparable learning curves. For the results in Table \ref{tab:acc_comm_res}, the final model was evaluated on all $16,586,035$ datapoints across all hold out client. Note that with $1,500$ communication rounds and $60$ clients per round, only $\sim26\%$ of all clients participate in training. We run two seeds for both \dpfedavg{} and \ourmethod{}.

\begin{table}[]
\centering
\caption{Federated training sets}
\begin{tabular}{lrrrr}
\toprule
Dataset & Number of devices & Total samples & \multicolumn{2}{l}{Samples per device} \\
        &                   &               & mean               & std            \\
        \midrule
MNIST&  100               & $50,000$  &  $500.00$ & $73.10$              \\
FEMNIST & 3500              & $705,595$       & $201.60$             & $78.92$             \\
Shakespeare&  660               & $3,678,451$  &  $5573.41$ & $6460.77$              \\
StackOverflow & $342,477$ & $135,818,730$& $396.58$ & $1278.94$\\
\bottomrule        
\end{tabular}
\label{tab:datasets_stats}
\end{table}

\subsection{Hyperparameters}
For all of the experiments in the main text we used $7$-bit per tensor quantization for \ourmethod{}. This was determined after the ablation study on the FEMNIST dataset, shown at Figure~\ref{fig:bitwidth_ablation}. The only difference was at the StackOverflow experiment where, to have a reasonable $\varepsilon$ DP guarantee, we used 5 quantization groups for the weight matrix of the logistic regression model (each with $10^6$ entries) and did per-tensor quantization for the biases (\emph{i.e.}, we had 6 quantization groups in total). As we mentioned in the main text, for \ourmethod{} we performed sampling with replacement to select clients for each round on all tasks, since the improved privacy amplification was beneficial. For \dpfedavg{} we use the traditional sampling without replacement on each round but with replacement across rounds. 

\textbf{Tuning privacy hyperparameters}~Privacy guarantees depend essentially on the ratio of the clipping threshold and the prior noise scale $\sigma$. For all experiments, we fixed the ratio, determined by our accounting upfront, in order to ensure a chosen $\varepsilon$ at the end of the experiment. More specifically, for \dpfedavg{} we tune the clipping threshold for each task and pick the appropriate noise scale for a given $\varepsilon$ guarantee. For \ourmethod{} the clipping threshold was a multiplicative factor of the standard deviation of the prior over the deltas, i.e., $c \sigma$, and $c$ was tuned in order to yield a specific $\varepsilon$ guarantee. The free parameter that we thus optimize is $\sigma$. For $\sigma$, we considered values ranging from $10^{-5}$ up to $1$, finding the right order of magnitude and then fine-tuning within that order if necessary (e.g. considering $0.001$, $0.003$, $0.005$, etc.), based on validation performance. Of course, in a practical deployment such tuning would need to be taken into account when computing final privacy parameters, as discussed by \citet[Appendix D]{abadi2016deep}.

\textbf{MNIST}~For this task we used SGD with a learning rate of $0.01$ for the client optimizer and Adam~\citep{kingma2014adam} with a learning rate of $2e-3$ for the server optimizer for all of the experiments. The $\beta_1, \beta_2$ parameters of Adam were kept at the default values ($0.9$ and $0.999$) and we trained for $1$k global communication rounds rounds. We used $10$ clients on each round, where each client performed $1$ local epoch with a batch size of $20$. For \dpfedavg{} the clipping threshold was $0.01$ whereas for \ourmethod{} the prior standard deviation was fixed to $\sigma = 0.005$. In order to get $\varepsilon = 3, 6$ for \dpfedavg{} we used a noise scale of $3.8$ and $2.15$, whereas for \ourmethod{} we used a $c=0.545$ and $c=0.87$ respectively.
 
\textbf{FEMNIST}~For optimization we used SGD with a learning rate of $0.05$ locally and SGD with a learning rate of $1.0$ globally, i.e., we averaged the local parameters. For all of the methods we sampled $100$ clients for each round and each client performed $1$ local epoch with a batch size of $20$. For \dpfedavg{} we trained for $1.5$k rounds with a clipping threshold of $0.1$ and for \ourmethod{} we found it beneficial to train for $4$k rounds (and thus we had to clip more aggressively on each round) and the $\sigma$ was fixed to $0.03$. In order to get $\varepsilon$ of $1, 3$ and $6$ we used a noise scale of $5$, $1.85$ and $1.15$ for \dpfedavg{} and a $c$ of $0.77$, $1.41$ and $1.745$ for \ourmethod{}.

\textbf{Shakespeare}~Both the global and local optimizers for this task were SGD with a learning rate of $1.0$. We trained for $200$ rounds where on each round we sampled $66$ clients and each client performed $1$ local epoch with a batch size of $10$. For \dpfedavg{} the clipping threshold was $0.3$ whereas for \ourmethod{} the prior standard deviation was fixed to $\sigma = 0.1$. In order to get a $\varepsilon$ of $3$ we used a noise scale of $2.15$ for \dpfedavg{} and a $c$ of $1.435$ for \ourmethod{}.

\textbf{StackOverflow}~For this task we mainly used the hyperparameters provided at~\citep{andrew2019differentially}. More specifically, for the local optimizer we used SGD with a learning rate of $10^{2.5}$ whereas for the global optimizer we used SGD with a learning rate of $10^{-0.25}$ and a momentum of $0.9$. We sampled $60$ clients per round and each client performed $1$ local epoch with a batch size of $100$. We trained the logistic regression model for $1.5$k rounds. For \dpfedavg{} the clipping threshold was $0.05$ whereas for \ourmethod{} the prior standard deviation was $\sigma=0.05$. For a $\varepsilon$ of $1$ we used a noise scale of $0.957$ for \dpfedavg{} and a $c$ of $1.227$ for \ourmethod{}.

\section{Additional results}
\label{sec:AppAdditionalResults}
\subsection{Accuracy for a fixed privacy budget}
 Figure~\ref{fig:priv_vs_acc} shows the accuracy achieved as a function of the privacy budget $\varepsilon$. For a discussion of these results refer to the main text. 

\begin{figure}[th]
    \begin{subfigure}{0.48\textwidth}
        \centering
        \includegraphics[width=\textwidth]{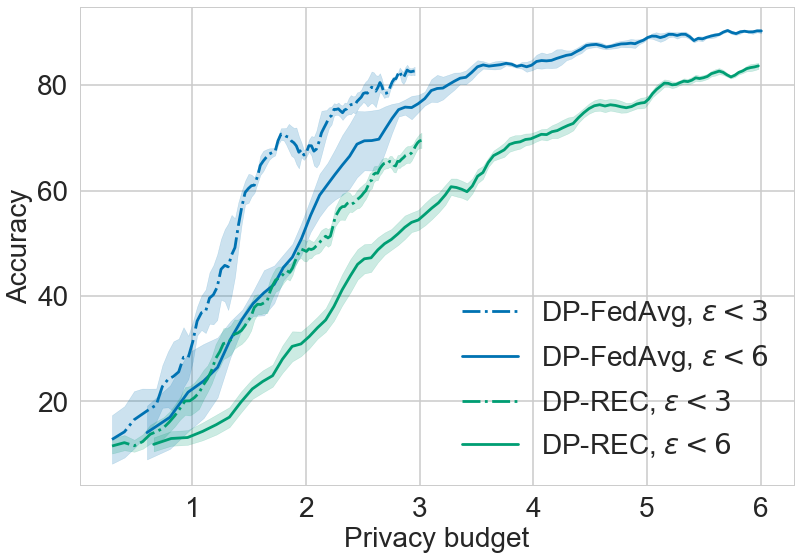}
        \caption{MNIST}
        \label{fig:mnist_privacy_accuracy}
    \end{subfigure}
    \begin{subfigure}{0.48\textwidth}
        \centering
        \includegraphics[width=\textwidth]{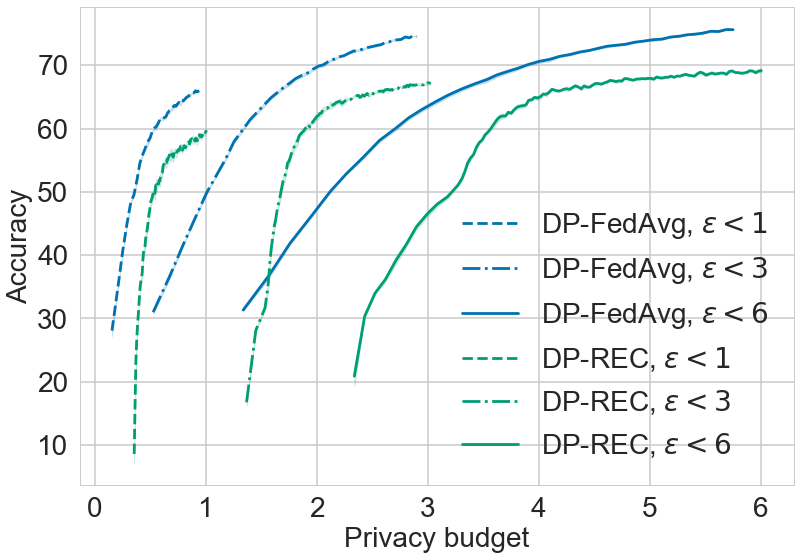}
        \caption{Femnist}
        \label{fig:femnist_privacy_accuracy}
    \end{subfigure}\\
    \begin{subfigure}{0.48\textwidth}
        \centering
        \includegraphics[width=\textwidth]{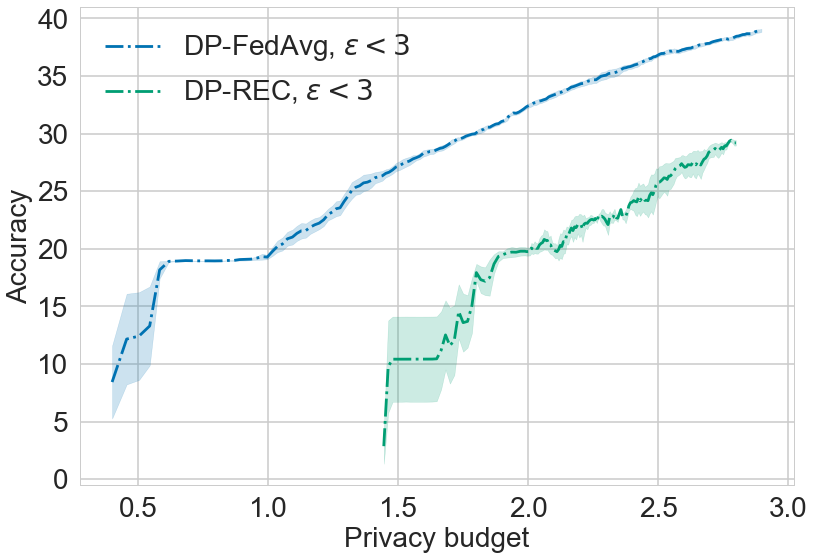}
        \caption{Shakespeare}
        \label{fig:shakespeare_privacy_accuracy}
    \end{subfigure}
    \begin{subfigure}{0.48\textwidth}
        \centering
        \includegraphics[width=\textwidth]{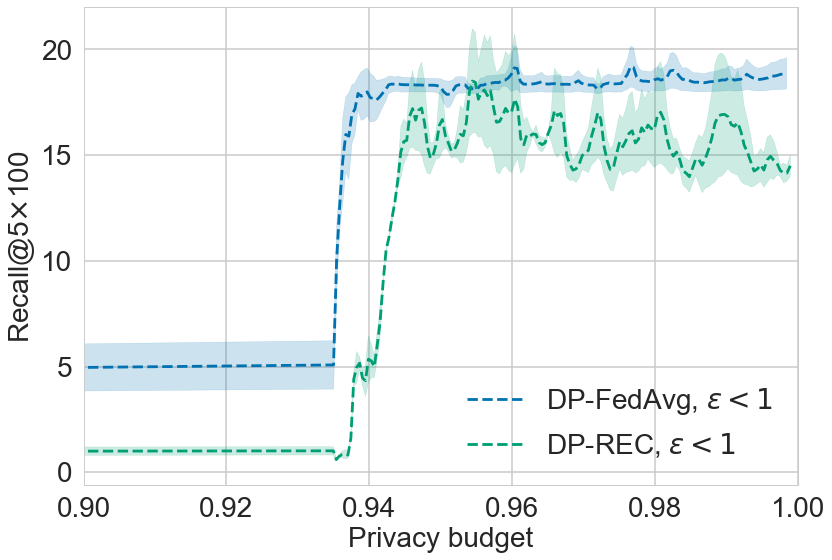}
        \caption{StackOverflow}
        \label{fig:stackoverflow_privacy_accuracy}
    \end{subfigure}
    \caption{Test accuracy (\%) as a function of the privacy budget $\varepsilon$ (for a fixed $\delta = 1/N^{1.1}$).} \label{fig:priv_vs_acc}
\end{figure}

\subsection{Private mean estimation}
\label{sec:app_private_mean_estimation}
We ran an experiment to compare \ourmethod{} with the method of \citet{chen2021breaking} and see the effects of Kashin's representation and shared randomness. Removing privacy from the equation and comparing compression methods head-to-head with 1-bit communication (+ bits for shared randomness, as we explain below), we find that \citep{chen2021breaking} performs slightly better than \ourmethod{} in terms of mean squared error (MSE), when the latter uses poorly informed prior (e.g. a zero-mean Gaussian). When \ourmethod{} is equipped with a better prior (e.g. Gaussian with the mean $1/\sqrt{d}$ in all dimensions), it is comparable or outperforms \citep{chen2021breaking}. For example, estimating a $200$-dimensional mean from 10k samples, MSE of \citep{chen2021breaking} is ~0.07, for \ourmethod{} with a poor prior it is ${\sim}0.3$, and for \ourmethod{} with a better prior it is ${\sim}0.03$. There are a few points to elaborate on:
\begin{itemize}
\item Shared randomness. Similarly to \ourmethod{}, \citep{chen2021breaking} have a choice between public randomness (defined by the server) or a shared randomness (defined by clients). As we explained in our paper, the latter is a preferred choice in terms of privacy, but adds a few bits to client-server communication (for the random seed). We found that it also improves performance and that the method of \citet{chen2021breaking} did not work well in few-bit settings with public randomness (MSE > 10.0 in the above example).
\item Prior. \ourmethod{} is a method that's more dependent on a good prior. With a poor choice, in one-shot settings like mean estimation, performance can be compromised. However, it makes it more suitable for FL scenarios where prior is gradually improved with every round.
\item Adding privacy to the mix. It is important to note that \ourmethod{} communication gets somewhat penalized due to the overhead term in our Theorem 1, which for the above setting requires at least $21$ bits per message to achieve $\delta < 10^{-5}$. Nonetheless, this is not a problem in FL, since practical communication bit-width would be larger in any case. Moreover, \ourmethod{} privacy can be further amplified by implementing the secure shuffler, similarly to \citet{girgis2020shuffled}, resulting in even tighter privacy guarantees.
\end{itemize}

\subsection{Additional baselines}
\label{sec:app_additional_baselines}
A curious reader may wonder how would a simple baseline of compressed gradients combined with \dpfedavg{} fare against our method. Unfortunately, combining \dpfedavg{} (or differential privacy in general) with compression is not a straightforward task, which is why it motivates our paper. There are two options: (i) ensure DP, then compress the update; (ii) compress, then ensure DP. Each option has its challenges. 

First, consider (i). Adding noise at the client and then compressing the update before transmission, might not allow to calibrate noise to the aggregate and the use tighter composition theorems (e.g., R\'enyi accountant), degrading the privacy guarantee. This is what essentially led to the development of hybrid methods such as \cpsgd{} and \ddgauss{}. As this direction was researched in the line of work leading up to \ddgauss{}, we take their method as the best representation of such approach. 
Let us now consider (ii). Once updates are compressed by the client, we cannot add noise directly, because it would negate any compression. Therefore, the client needs to transmit the update first, using secure aggregation to protect against honest-but-curious server. We can compress updates using scalar quantization, but secure aggregation might add communication overhead countering some of the effects of compression (see \cite{bonawitz2017practical}). Even without the additional communication overhead of secure aggregation, we empirically observed that such a method can have both worse accuracy and worse communication efficiency than \ourmethod{} and represents a rather weak baseline. We ran an experiment on our MNIST task and combined 8-bit scalar quantization of the client updates (in a way that satisfies the desired sensitivity) with \dpfedavg{} with $\varepsilon=3$; there we observed that the global accuracy reached $\sim58\%$ after 1k rounds, which is both smaller than the \ourmethod{} result and significantly more expensive in terms of communication.
Lastly, if we were to consider vector quantization to get ahead in terms of compression ratios, it would hinder application of secure aggregation, leaving us without any protection against honest-but-curious server (since adding noise is not possible either, as it would nullify compression).

Of course, there exists a number of diverse gradient compression methods that could be considered, and that could be more easily combined with either \texttt{SecAgg} or DP, but we leave exploration of these methods for future work.

\subsection{Compression-only performance of our method}
\label{sec:app_compression_only}
In order to investigate the behaviour of the compression part of \ourmethod{}, we performed some experiments on our $4$k round FEMNIST task where we omit the clipping part of \ourmethod{}. We consider three runs:
\begin{enumerate}
    \item a baseline of vanilla \fedavg{} on this task without compression;
    \item \rec{} compression with the same hyperparameters as in DP experiments but without clipping;
    \item \rec{} compression with quantization of groups of size $64$ (instead of per-tensor quantization).
\end{enumerate}
The results can be found at Table~\ref{tab:res_comp_only}, where we also include the results from \ourmethod{} for reference. We can see that the compression part of \ourmethod{} performs quite well, reaching performance similar to \fedavg{} while significantly reducing the total communication costs. It is worthwhile to note that there is a multitude of other hyperparameters that could be tuned to further refine the compression-only performance, such as: tuning the prior $\sigma$; varying the vector-size over which $\sigma$ is computed; the bit-width $b$; additional application of entropy-coding for the larger number of indices; adaptive $K$ per round. We leave a more detailed  compression-only evaluation of our method to future work. 

\begin{table}[htb]
    \centering
    \begin{tabular}{lcc}
        \toprule
         Method & Accuracy & GB   \\
         \midrule
         \ourmethod{} ($\varepsilon\!=\!1$) & $59.7\pm0.3$ & $14.2$\\
         \ourmethod{} ($\varepsilon\!=\!3$) & $67.1\pm0.2$ & $14.2$\\
         \ourmethod{} ($\varepsilon\!=\!6$) & $69.2\pm0.3$ & $14.2$\\
         \midrule
         \fedavg{} (no DP / compression) & $86.28$ & $690.6$\\ 
         \rec{} (no DP clipping) & $80.69$ & $14.2$ \\ 
         \rec{} (no DP clipping, $64$) & $84.14$ & $332.2$\\
         \bottomrule
    \end{tabular}
    \caption{Compression-only (i.e., no DP) ablation studies, all run for $4$k rounds. $^*$was still improving after $4k$ rounds.}
    \label{tab:res_comp_only}
\end{table}

\fi

\end{document}